\documentclass{article}

\usepackage{iclr2027_conference,times}
\usepackage{hyperref}
\usepackage{url}
\usepackage{etoolbox}

\def\papermode{arxiv}

\newif\ifarxiv
\arxivfalse

\ifdefstring{\papermode}{arxiv}{
  \arxivtrue
  \iclrfinalcopy
}{}

\ifdefstring{\papermode}{camera}{
  \iclrfinalcopy
}{}

\makeatletter

\patchcmd{\@maketitle}
  {\lhead{Published as a conference paper at ICLR 2027}}
  {%
    \ifarxiv
      \lhead{Preprint.}
    \else
      \lhead{Published as a conference paper at ICLR 2027}
    \fi
  }
  {}
  {\PackageError{ICLR-arXiv}
    {Could not patch ICLR header}
    {The ICLR style file may have changed.}}

\makeatother

\usepackage{amsmath,amsfonts,bm}

\def\eqref#1{equation~\ref{#1}}

\def\1{\bm{1}}

\DeclareMathAlphabet{\mathsfit}{\encodingdefault}{\sfdefault}{m}{sl}
\SetMathAlphabet{\mathsfit}{bold}{\encodingdefault}{\sfdefault}{bx}{n}

\DeclareMathOperator*{\argmax}{arg\,max}

\usepackage{hyperref}
\usepackage{url}
\usepackage[utf8]{inputenc} 
\usepackage[T1]{fontenc}    
\usepackage{hyperref}       
\usepackage{url}            
\usepackage{booktabs}       
\usepackage{amsfonts}       
\usepackage{nicefrac}       
\usepackage{microtype}      
\usepackage{xcolor}         

\RequirePackage{fancyhdr}
\RequirePackage{algorithm}
\RequirePackage{algorithmic}
\RequirePackage{eso-pic} 
\RequirePackage{forloop}
\usepackage{amssymb}
\usepackage{mathtools}
\usepackage{amsthm}
\usepackage{cleveref}
\crefname{equation}{Eq.}{Eqs.}
\crefname{table}{Table}{Tables}
\crefname{figure}{Figure}{Figures}
\crefname{section}{Section}{Sections}
\crefname{algorithm}{Algorithm}{Algorithms}
\theoremstyle{plain}
\newtheorem{theorem}{Theorem}[section]
\newtheorem{proposition}[theorem]{Proposition}

\newtheorem{corollary}[theorem]{Corollary}
\theoremstyle{definition}

\theoremstyle{remark}

\usepackage{xcolor,colortbl}
\usepackage{adjustbox}
\usepackage{url}
\usepackage{multirow,multicol,xspace}
\usepackage{float}
\usepackage{graphics}
\usepackage{paralist}
\usepackage{wrapfig}
\usepackage[normalem]{ulem}
\usepackage{multirow}
\usepackage{adjustbox}
\usepackage[font=small,skip=6pt]{caption}

\newcommand{\MODEL}{\textsc{GraphFDM}\xspace}
\newcommand{\MODELNOSDC}{\ensuremath{\textsc{GraphFDM}_{\text{w/o SDC}}}}
\newcommand{\BEST}[1]{\textbf{\textcolor{purple}{#1}}}

\title{Graph Forward Distribution Matching for Molecular Inverse Design}

\author{%
    Yihan Zhu\\
    University of Notre Dame\\
    \texttt{yzhu25@nd.edu}\\
    \And 
    Yuhan Liu\\
    University of Notre Dame\\
    \texttt{yliu57@nd.edu}\\
    \And 
    Brett Savoie\\
    University of Notre Dame\\
    \texttt{bsavoie2@nd.edu}\\
    \And 
    Tengfei Luo\\
    University of Notre Dame\\
    \texttt{tluo@nd.edu}\\
    \And 
    Meng Jiang\\
    University of Notre Dame\\
    \texttt{mjiang2@nd.edu}\\
}

\begin{document}

\maketitle

\begin{abstract}
Achieving precise control over multiple properties without sacrificing chemical validity remains a central challenge in molecular inverse design. Existing reinforcement learning (RL) methods fine-tune graph diffusion models by treating \textbf{reverse} sampling as a sequential policy, using a single terminal reward to optimize hundreds of coupled decisions. They often suffer from instability, validity collapse, and limited property gains. 
We introduce \MODEL (Graph Forward Distribution Matching), a new online RL paradigm for graph diffusion that performs optimization through the \textbf{forward} process.
\MODEL uses valid generations to define a reward-tilted target distribution jointly optimized over graph size and molecular structure for each property condition, incorporating reinforcement signals into supervised learning without storing reverse trajectories. 
We derive the unique optimal target, prove a condition-wise improvement guarantee, and show that the fixed graph-size prior of standard graph diffusion leaves an irreducible matching gap.
In multi-conditional polymer and small-molecule generation, \MODEL achieves the lowest MAE on every target property, with reductions of up to 53.0\% relative to the strongest baselines and chemical validity above 0.99. It further generalizes to out-of-distribution property combinations. 
\end{abstract}

\section{Introduction}
\label{sec:introduction}
Graph diffusion models have emerged as a powerful approach for molecular inverse design, learning from large-scale unlabeled molecular data to generate chemically valid structures by denoising discrete atom and bond types~\citep{digress}.
However, they offer limited controllability over target properties under scarce labeled supervision~\citep{li2026disentangled,demodiff}. 
Practical inverse design in drug and materials discovery requires precise, simultaneous control over task-specific objectives such as synthetic accessibility and gas permeability, while preserving chemical validity.

Reinforcement learning (RL) provides a natural post-training paradigm to improve controllability and exploration~\citep{ddpo}. 
Existing RL approaches fine-tune graph diffusion models by treating reverse sampling as a sequential policy, using a single terminal reward to optimize hundreds of coupled decisions along each sampling trajectory, as shown in \cref{fig:mainfig} (a)~\citep{comole}. 
Under multi-property objectives, modifying a functional group, ring, or linker can shift properties simultaneously, and a single inconsistent bond can invalidate the entire structure. 
These coupled effects make reverse-policy updates prone to disrupting the structural priors learned during pretraining.
For example, GDPO~\citep{gdpo} and GraphGRPO~\citep{graphgrpo} drive validity close to zero while providing very limited property improvement in \cref{tab:main_results_gas}. 

We propose a new online RL framework for graph diffusion named Graph Forward Distribution Matching (\textbf{\MODEL}). 
\MODEL shifts policy optimization from reverse sampling trajectories to the forward process, incorporating terminal rewards into the denoising objective used in pretraining.
As illustrated in \cref{fig:mainfig} (b), 
\MODEL constructs a reward-tilted target by reweighting valid generations and matches it through denoising under the same sample weights.

Standard graph diffusion methods denoise node and edge types at a fixed graph size (the number of active nodes), sampled from a condition-independent prior~\citep{digress,graphdit}. 
However, molecular size is itself an important conditioning variable that influences property control and chemical validity~\citep{freegress,griddd}. Holding the size marginal fixed leaves an irreducible gap to the target distribution, which we formalize in Proposition~\ref{prop:structural_marginal_gap}.
We introduce \emph{Structural Distribution Control} (SDC) to match the complete terminal graph distribution, jointly optimizing the graph-size distribution and the conditional distribution of molecular structures at each size under separate KL constraints.

We derive the unique optimal target distribution and prove its condition-wise improvement guarantee (\cref{sec:what_to_match}).
We formalize the limitation of a fixed graph-size prior as an irreducible gap and show how SDC overcomes it (\cref{sec:structural_marginal_gap}).
In experiments, across a six-property molecular and a four-property polymer benchmark (\cref{tab:main_results_mol,tab:main_results_gas}), \MODEL outperforms the strongest baseline on every property, reducing MAE by up to 53.0\% and maintaining chemical validity above 0.99.
We show that \MODEL generalizes to unseen, out-of-distribution property combinations (\cref{fig:ood}).
Together, these results establish \MODEL as a principled and effective post-training framework for controllable molecular inverse design.
\begin{figure}[tbp]
  \centering
  \includegraphics[width=\linewidth]{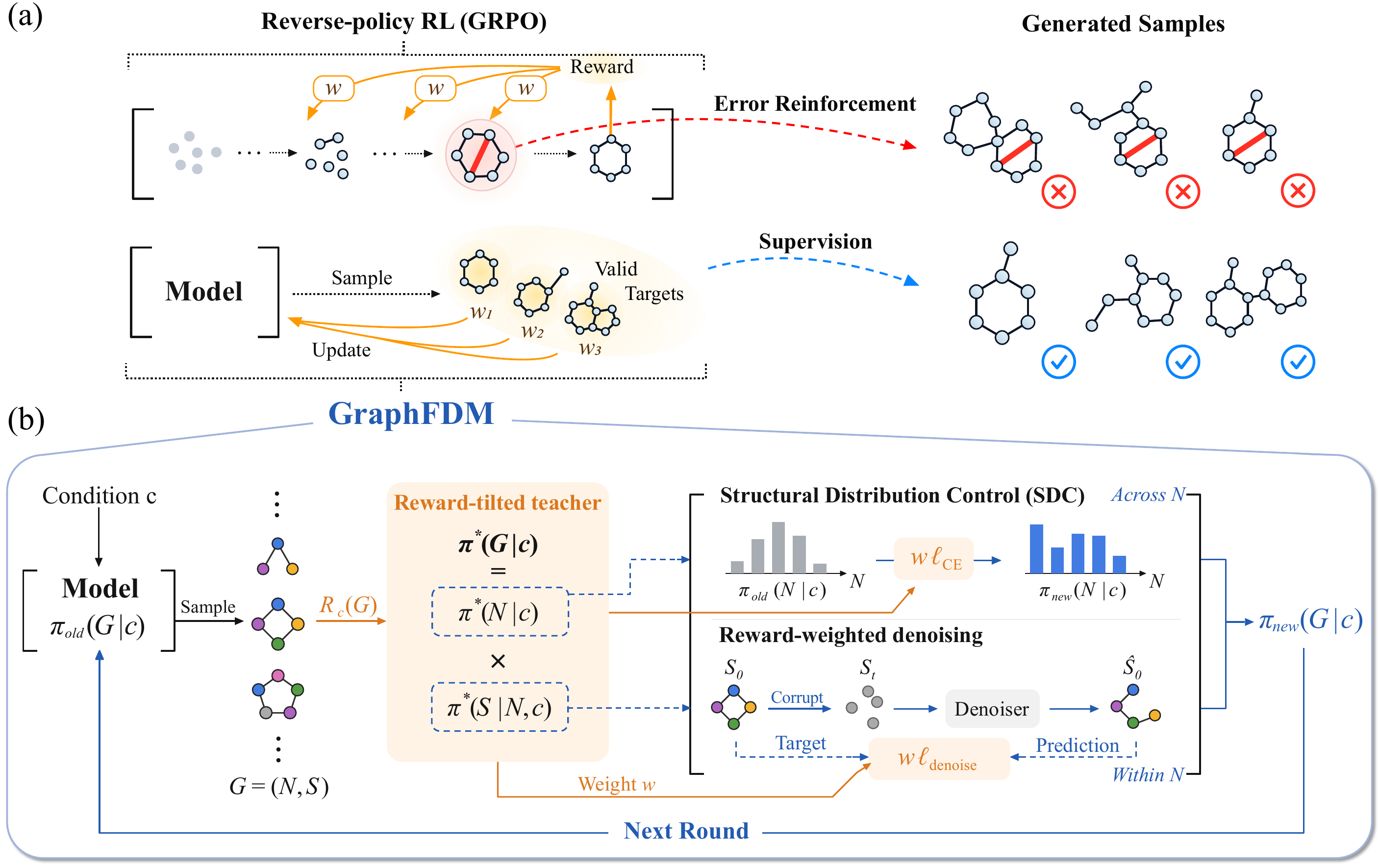}
    \caption{\textbf{Overview of \MODEL.}
    (a) Reverse-policy RL (e.g., GRPO) assigns terminal rewards across coupled reverse transitions which can reinforce structural errors.
    (b) At each online round, \MODEL constructs a reward-tilted teacher $\pi^{*}(G\mid c)$ over valid graphs $G=(N,S)$. 
    SDC reallocates probability across the number of active nodes $N$ while reward-weighted denoising refines molecular structures $S$ at each $N$.}
  \label{fig:mainfig}
\end{figure}

\section{Preliminaries}
\label{sec:preliminary}
\paragraph{Motif-Aware Graph Diffusion}
Let $\mathbf{c}\in\mathcal{C}\subseteq\mathbb{R}^{d}$ specify the desired values of $d$ molecular properties. 
Let $G\in\mathcal{G}$ denote a candidate molecular graph and $\mathcal{G}_{\mathrm{val}}\subseteq\mathcal{G}$ the set of valid molecular graphs. 
Molecular inverse design seeks a conditional distribution that assigns high probability to valid graphs whose
properties match $\mathbf{c}$.
We use the motif-aware discrete graph diffusion backbone of CoMole~\citep{comole}. Under its losslessly decodable representation, a clean molecular graph is written as
$ G=(N,S)$, with $S=(X,E,P)$,
where $N$ denotes the number of active graph units, each represented by
either an atom or motif in the vocabulary. $X$, $E$, and
$P$ denote the node, edge, and attachment-position labels, respectively. We set $S_0=S$ and represent $N$ by an active-node mask $m$ satisfying $N=\lVert m\rVert_1$. 
The mask remains fixed throughout diffusion. 
Let
$q_{t}(S_t\mid S_0,m)$ denote the forward corruption distribution
at timestep $t$. 
Given $(S_t,m,t,\mathbf c)$, the conditional denoiser
predicts the clean structure $S_0$ with the standard objective
\begin{equation}
\label{eq:denoise_obj}
\mathcal L_{\mathrm{den}}(\theta)
=
\mathbb E_{\substack{
(N,S_0,\mathbf c)\sim p_{\mathrm{data}},\;
t\sim\operatorname{Unif}(\{1,\ldots,T\}),\\
S_t\sim q_{t}(\cdot\mid S_0,m)
}}
\left[
\ell_{\mathrm{den}}^\theta
(S_0,S_t,m,t,\mathbf c)
\right], 
\end{equation}
where $\ell_{\mathrm{den}}^\theta$ is the sum of masked
categorical cross-entropies for $X$, $E$, and $P$.
During generation, $N\sim p_0(N)$ is sampled from a fixed, condition-independent empirical size prior.
Starting
from $S_T\sim p_T(\cdot\mid m)$, the reverse kernels
$p_\theta(S_{t-1}\mid S_t,m,t,\mathbf c)$ then induce the clean
conditional distribution $\pi_\theta(S\mid N,\mathbf c)$. Hence,
\begin{equation}
\label{eq:backbone_factorization}
\pi_\theta(G\mid\mathbf c)
=
p_0(N)\,
\pi_\theta(S\mid N,\mathbf c).
\end{equation}

\paragraph{Reward-Based Post-Training}
Reward-based post-training uses feedback from model-generated molecules to improve conditional control. For a target condition $\mathbf{c}$, let $R_{\mathbf c}:\mathcal G_{\mathrm{val}}\rightarrow\mathbb R$ denote a terminal reward that assigns larger values to valid molecules whose evaluated properties better match $\mathbf{c}$.
Reverse-policy optimization treats the denoising trajectory
$
S_T\rightarrow S_{T-1}\rightarrow\cdots\rightarrow S_0
$
as a finite-horizon policy and propagates the terminal reward through updates proportional to
$
R_{\mathbf{c}}(G)
\sum_{t=1}^{T}
\nabla_\theta
\log p_\theta
(S_{t-1}\mid S_t,m,t,\mathbf{c}).
$
The same terminal molecular feedback weights the transitions along the sampled reverse trajectory.

\section{Graph Forward Distribution Matching}
\label{sec:theory}
At online iteration $k$, \MODEL turns terminal molecular feedback
into a single structured teacher over the clean molecular graph \(G=(N,S)\). This teacher is then projected onto the native learning objectives for the structural marginal over \(N\) and the conditional molecular distribution over \(S\).
The current clean-graph generator factorizes as
\begin{equation}
\label{eq:current_graph_distribution}
\pi_k(G\mid\mathbf c)
=
p_k(N\mid\mathbf c)\,
\pi_{\theta_k}(S\mid N,\mathbf c),
\qquad
p_k(\cdot\mid\mathbf c)\ll p_0.
\end{equation}
We initialize \(p_k(N\mid\mathbf c)\) with the empirical prior \(p_0(N)\) and subsequently use the learned controller
\(p_k(N\mid\mathbf c)=p_{\phi_k}(N\mid\mathbf c)\).
Here, \(\ll\) restricts the controller to the structural support of the empirical prior.
Chemically valid graphs define the admissible support of the terminal teacher. 
We define the valid-conditioned
reference as
\begin{equation}
\label{eq:valid_reference}
\bar\pi_k(G\mid\mathbf c)
:=
\frac{
\pi_k(G\mid\mathbf c)\,
\mathbf 1\{G\in\mathcal G_{\mathrm{val}}\}
}{
\pi_k(\mathcal G_{\mathrm{val}}\mid\mathbf c)
},
\qquad
\pi_k(\mathcal G_{\mathrm{val}}\mid\mathbf c)>0.
\end{equation}
Writing
$
\bar\pi_k(N,S\mid\mathbf c)
=
\bar p_k(N\mid\mathbf c)\,
\bar\pi_k(S\mid N,\mathbf c),
$
$p_k$ is the structural marginal used to sample
$N$, whereas $\bar p_k$ is its valid-conditioned counterpart. 
They generally differ because
molecular validity can depend on $N$ (see~\cref{app:valid_structural_support}).

For any terminal teacher
\(\mu_k(\cdot\mid\mathbf c)\ll\bar\pi_k(\cdot\mid\mathbf c)\),
\MODEL defines the density ratio
\begin{equation}
\label{eq:generic_teacher_ratio}
w_k^\mu(G,\mathbf c)
:=
\frac{
\mu_k(G\mid\mathbf c)
}{
\bar\pi_k(G\mid\mathbf c)
}.
\end{equation}
\MODEL then re-corrupts clean graphs sampled from \(\bar\pi_k\) and
uses \(w_k^\mu\) to express the teacher’s denoising and structural
prediction risks as weighted expectations under the current reference.
Here, \emph{forward} refers to constructing supervision by reapplying
the fixed forward-corruption process. 
We first establish this teacher-agnostic projection,
then identify the structural-marginal gap of fixed-mask denoising, and
finally instantiate \(\mu_k=\pi_k^\star\) through SDC.

\subsection{Forward-denoising Matching}
\label{sec:how_to_match}
Factor the terminal teacher introduced above as
\[
\mu_k(N,S\mid\mathbf c)
=
\mu_{N,k}(N\mid\mathbf c)\,
\mu_{S,k}(S\mid N,\mathbf c).
\]
\MODEL transports both factors to their respective native learning
objectives using the single density ratio \(w_k^\mu\) defined in
\cref{eq:generic_teacher_ratio}.

\begin{proposition}[Forward-denoising matching]
\label{prop:forward_denoising_matching}
For any target condition \(\mathbf c\), assume that the losses below
are integrable under the corresponding teacher-induced distributions.
Then the weighted forward-denoising objective satisfies
\begin{equation}
\label{eq:forward_denoising_projection}
\begin{aligned}
\mathcal L_{S,k}^{\mu}(\theta;\mathbf c)
&:=
\mathbb E_{\substack{
G=(N,S_0)\sim\bar\pi_k(\cdot\mid\mathbf c),\\
t\sim\operatorname{Unif}(\{1,\ldots,T\}),\\
S_t\sim q_t(\cdot\mid S_0,m)
}}
\left[
w_k^\mu(G,\mathbf c)
\ell_{\mathrm{den}}^\theta
(S_0,S_t,m,t,\mathbf c)
\right]
\\
&=
\mathbb E_{\substack{
G=(N,S_0)\sim\mu_k(\cdot\mid\mathbf c),\\
t\sim\operatorname{Unif}(\{1,\ldots,T\}),\\
S_t\sim q_t(\cdot\mid S_0,m)
}}
\left[
\ell_{\mathrm{den}}^\theta
(S_0,S_t,m,t,\mathbf c)
\right],
\end{aligned}
\end{equation}
where \(m\) denotes the fixed structural mask associated with \(N\).

The same density ratio projects the structural marginal:
\begin{equation}
\label{eq:structural_projection_loss}
\begin{aligned}
\mathcal L_{N,k}^{\mu}(\phi;\mathbf c)
&:=
\mathbb E_{G\sim\bar\pi_k(\cdot\mid\mathbf c)}
\left[
w_k^\mu(G,\mathbf c)
\bigl(-\log p_\phi(N\mid\mathbf c)\bigr)
\right]
\\
&=
\mathbb E_{N\sim\mu_{N,k}(\cdot\mid\mathbf c)}
\left[
-\log p_\phi(N\mid\mathbf c)
\right].
\end{aligned}
\end{equation}
\end{proposition}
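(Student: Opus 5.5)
Both identities are instances of the change-of-measure (importance-weighting) identity $\mathbb E_{\bar\pi_k}[w_k^\mu f]=\mathbb E_{\mu_k}[f]$, valid for $\mu_k\ll\bar\pi_k$. The plan is to apply it once to a suitably integrated inner function for the denoising loss, and once to a function of $N$ alone for the structural loss. Throughout, the graph space $\mathcal G$ is discrete (finite or countable), so all expectations are sums and the Radon--Nikodym derivative is the pointwise ratio in \cref{eq:generic_teacher_ratio}. On points where $\bar\pi_k(G\mid\mathbf c)=0$, absolute continuity gives $\mu_k(G\mid\mathbf c)=0$, so those terms vanish on both sides.

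\emph{Denoising identity.} First define the inner expected loss
\[
F_\theta(G,\mathbf c):=\frac1T\sum_{t=1}^T\sum_{S_t} q_t(S_t\mid S_0,m)\,\ell_{\mathrm{den}}^\theta(S_0,S_t,m,t,\mathbf c),
\]
where $G=(N,S_0)$ and $m$ is the mask determined by $N$. The key structural point is that $w_k^\mu(G,\mathbf c)$ depends only on the clean graph $G$ and $\mathbf c$. It does not depend on $t$ or $S_t$, because the forward kernel $q_t(\cdot\mid S_0,m)$ and the uniform law on $t$ are the same under both outer distributions. Since $\ell_{\mathrm{den}}^\theta\ge 0$ as a sum of cross-entropies, Tonelli lets us write the left side of \cref{eq:forward_denoising_projection} as an iterated expectation, $\sum_G \bar\pi_k(G\mid\mathbf c)\,w_k^\mu(G,\mathbf c)\,F_\theta(G,\mathbf c)$. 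Substituting $\bar\pi_k w_k^\mu=\mu_k$ pointwise on the support of $\bar\pi_k$ gives $\sum_G\mu_k(G\mid\mathbf c)F_\theta(G,\mathbf c)$. Un-nesting with Tonelli again yields the right side. The integrability assumption only ensures that these common values are finite.

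\emph{Structural identity.} Apply the same identity to $f(G)=-\log p_\phi(N\mid\mathbf c)$, which is nonnegative because $p_\phi$ is a probability mass function on sizes. This gives $\mathbb E_{G\sim\mu_k}[-\log p_\phi(N\mid\mathbf c)]$. Since the integrand depends on $G=(N,S)$ only through $N$, we sum out $S$ using the factorization $\mu_k=\mu_{N,k}\,\mu_{S,k}$ and $\sum_S\mu_{S,k}(S\mid N,\mathbf c)=1$. What remains is the expectation under the marginal $\mu_{N,k}$, which is \cref{eq:structural_projection_loss}. Note that the full weight $w_k^\mu$ is used here, not a size-only ratio; the marginalization happens only after the change of measure.

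\emph{Main obstacle.} The only delicate step is the bookkeeping in the denoising identity. We must make explicit that $S_t$ is generated by the same $q_t(\cdot\mid S_0,m)$ under both sides, with the mask $m$ a deterministic function of $N$ and hence of $G$. Only then does the weight factor out of the inner expectation over $(t,S_t)$. Once $F_\theta$ is defined as a function of the clean graph alone, both claims reduce to the single identity $\mathbb E_{\bar\pi_k}[w_k^\mu f]=\mathbb E_{\mu_k}[f]$.
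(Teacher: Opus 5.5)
Your proposal is correct and matches the paper's proof: the paper also defines the inner expected loss $H_\theta(G,\mathbf c)$ over $(t,S_t)$ as a function of the clean graph alone, and applies the change-of-measure identity $\mathbb E_{\bar\pi_k}[w_k^\mu f]=\mathbb E_{\mu_k}[f]$ both to it and to $-\log p_\phi(N\mid\mathbf c)$. Your Tonelli justification, the handling of points outside the support of $\bar\pi_k$, and the explicit marginalization over $S$ fill in steps the paper leaves implicit.
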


\Cref{eq:forward_denoising_projection} is an exact
condition-wise change of measure. It transports terminal teacher
preferences to every diffusion noise level by reapplying the
backbone's fixed forward-corruption process.
\Cref{eq:structural_projection_loss} transports the marginal of the
same teacher to the structural controller.

Let \(\mathbb E_{\mathbf c}\) denote the empirical average over target
conditions. Because the denoiser and structural controller share
parameters across conditions, their population objectives are
$$
\mathcal L_{S,k}^{\mu}(\theta)
:=
\mathbb E_{\mathbf c}
\left[
\mathcal L_{S,k}^{\mu}(\theta;\mathbf c)
\right],
\qquad
\mathcal L_{N,k}^{\mu}(\phi)
:=
\mathbb E_{\mathbf c}
\left[
\mathcal L_{N,k}^{\mu}(\phi;\mathbf c)
\right].
$$

Under simultaneous realizability, the population minimizers recover the teacher-induced denoising posterior marginals at every noise level and the structural marginal \(\mu_{N,k}\).
This is the precise population-level sense in which \MODEL projects and matches the teacher, see~\cref{app:proof_reward_to_denoising,app:proper_scoring}.

\subsection{Beyond Fixed-Mask Denoising: Structural-Marginal Gap}
\label{sec:structural_marginal_gap}

In traditional graph diffusion models, the structural variable
\(N\) is sampled before the reverse process and its mask
remains fixed throughout denoising. 
Any denoiser-only
update is restricted to the form
$
p_k(N\mid\mathbf c)\,
\widetilde\pi(S\mid N,\mathbf c),
$
leaving the structural marginal
\(p_k(N\mid\mathbf c)\) unchanged.
For the terminal teacher introduced above,
\(\mu_k\ll\bar\pi_k\ll\pi_k\), which implies
\(\mu_{N,k}(\cdot\mid\mathbf c)\ll
p_k(\cdot\mid\mathbf c)\).

\begin{proposition}[Structural-marginal gap]
\label{prop:structural_marginal_gap}
For any conditional molecular distribution
\(\widetilde\pi(S\mid N,\mathbf c)\) for which the divergence below is
finite, the KL chain rule gives
\begin{equation}
\label{eq:structural_kl_decomposition}
\begin{aligned}
&
D_{\mathrm{KL}}
\left(
\mu_k(\cdot\mid\mathbf c)
\Vert
p_k(N\mid\mathbf c)\,
\widetilde\pi(S\mid N,\mathbf c)
\right)
\\
&=
D_{\mathrm{KL}}
\left(
\mu_{N,k}(\cdot\mid\mathbf c)
\Vert
p_k(\cdot\mid\mathbf c)
\right)
\\
&+
\mathbb E_{N\sim\mu_{N,k}(\cdot\mid\mathbf c)}
D_{\mathrm{KL}}
\left(
\mu_{S,k}(\cdot\mid N,\mathbf c)
\Vert
\widetilde\pi(\cdot\mid N,\mathbf c)
\right).
\end{aligned}
\end{equation}
Consequently,
\begin{equation}
\label{eq:irreducible_structural_gap}
\begin{aligned}
&
\inf_{\widetilde\pi(S\mid N,\mathbf c)}
D_{\mathrm{KL}}
\left(
\mu_k(\cdot\mid\mathbf c)
\Vert
p_k(N\mid\mathbf c)\,
\widetilde\pi(S\mid N,\mathbf c)
\right)
=
D_{\mathrm{KL}}
\left(
\mu_{N,k}(\cdot\mid\mathbf c)
\Vert
p_k(\cdot\mid\mathbf c)
\right),
\end{aligned}
\end{equation}
where the infimum is attained by
$
\widetilde\pi(\cdot\mid N,\mathbf c)
=
\mu_{S,k}(\cdot\mid N,\mathbf c)
$
for \(\mu_{N,k}(\cdot\mid\mathbf c)\)-almost every \(N\).
\end{proposition}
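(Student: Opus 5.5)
We prove \cref{eq:structural_kl_decomposition} by writing the KL divergence as an expectation of a log-ratio under $\mu_k(\cdot\mid\mathbf c)$ and splitting that log-ratio along the factorization $G=(N,S)$. Both $\mu_k$ and the comparison distribution $p_k(N\mid\mathbf c)\,\widetilde\pi(S\mid N,\mathbf c)$ factor as a marginal over $N$ times a conditional over $S$. The teacher's marginal is $\mu_{N,k}$ and its conditional is $\mu_{S,k}$. Because $\mathcal G$ is discrete (counts $N$ and finite label tensors $S$), densities are probability mass functions and no measure-theoretic subtleties arise beyond support conditions. Wherever $\mu_k(N,S\mid\mathbf c)>0$ and the right-hand side is positive, we have
\[
\log\frac{\mu_k(N,S\mid\mathbf c)}{p_k(N\mid\mathbf c)\,\widetilde\pi(S\mid N,\mathbf c)}
=
\log\frac{\mu_{N,k}(N\mid\mathbf c)}{p_k(N\mid\mathbf c)}
+
\log\frac{\mu_{S,k}(S\mid N,\mathbf c)}{\widetilde\pi(S\mid N,\mathbf c)}.
\]
Taking the expectation under $\mu_k$ gives the first term directly as $D_{\mathrm{KL}}(\mu_{N,k}\Vert p_k)$, since that term depends only on $N$. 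The second term is handled by the tower property: conditioning on $N$ turns it into $\mathbb E_{N\sim\mu_{N,k}}D_{\mathrm{KL}}(\mu_{S,k}(\cdot\mid N,\mathbf c)\Vert\widetilde\pi(\cdot\mid N,\mathbf c))$.

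The main technical care is justifying the splitting of the expectation, i.e.\ ruling out an $\infty-\infty$ situation. First, the marginal term is finite. The paper has already shown $\mu_{N,k}\ll p_k$ via $\mu_k\ll\bar\pi_k\ll\pi_k$. Moreover, $\mu_{N,k}$ lives on the support of $p_0$, which is a finite set of graph sizes, so $D_{\mathrm{KL}}(\mu_{N,k}\Vert p_k)$ is a finite sum of finite terms. Second, the left-hand side is assumed finite, which forces $\mu_k\ll p_k\widetilde\pi$. Hence $\mu_{S,k}(\cdot\mid N)\ll\widetilde\pi(\cdot\mid N)$ for $\mu_{N,k}$-a.e.\ $N$.

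For the splitting itself, both conditional KL terms are nonnegative. So I would apply the decomposition to the negative parts first, or simply note that each summand's expectation is well defined in $[0,\infty]$, since $x\log x$-type integrands have integrable negative parts. Additivity then holds in the extended reals. Since the left-hand side and the marginal term are finite, the conditional term is finite as well.

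For \cref{eq:irreducible_structural_gap}, the first term on the right of \cref{eq:structural_kl_decomposition} does not depend on $\widetilde\pi$, and the second term is nonnegative by Gibbs' inequality. This gives the lower bound $D_{\mathrm{KL}}(\mu_{N,k}\Vert p_k)$ for every admissible $\widetilde\pi$. The bound is attained by choosing $\widetilde\pi(\cdot\mid N,\mathbf c)=\mu_{S,k}(\cdot\mid N,\mathbf c)$ on the support of $\mu_{N,k}$, with $\widetilde\pi$ arbitrary elsewhere. This choice makes the left-hand side finite, since it equals the finite marginal term. Conversely, equality forces the conditional KL to vanish $\mu_{N,k}$-a.e., which characterizes the minimizers as stated. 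I expect the only real obstacle to be the integrability bookkeeping in the splitting step; the algebra itself is the standard KL chain rule.
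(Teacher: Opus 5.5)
Your proposal is correct and follows the paper's own proof. Like the paper, you split the log-ratio along the factorizations of $\mu_k$ and $p_k\,\widetilde\pi$, take the expectation under $\mu_k$, and then use nonnegativity of the conditional KL, with attainment at $\widetilde\pi=\mu_{S,k}$. Your added integrability bookkeeping is sound and makes explicit what the paper leaves implicit: the marginal term is finite via the finite support of $p_0$, and the expectation is split in the extended reals.
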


Thus, even an oracle denoiser that exactly recovers the teacher
conditional \(\mu_{S,k}\) leaves a nonzero mismatch whenever
\(\mu_{N,k}(\cdot\mid\mathbf c)\neq
p_k(\cdot\mid\mathbf c)\).
This result motivates the structural
projection in \cref{eq:structural_projection_loss}: forward denoising learns teacher-aligned molecular structures
within each $N$. SDC, introduced next, unlocks reward-driven probability allocation
across $N$. The proof is provided in \cref{app:structural_gap}.

\subsection{Closing the Gap: Structural Distribution Control}
\label{sec:what_to_match}

Structural Distribution Control (SDC) addresses the gap established in
Prop.~\ref{prop:structural_marginal_gap}
by making the active-node-count
marginal part of the reward-based graph target. It separately controls
teacher movement across \(N\) and conditional molecular refinement
within each \(N\).

Let
$
\rho(N,S\mid\mathbf c)
=
\rho_N(N\mid\mathbf c)\,
\rho_S(S\mid N,\mathbf c)
$
be a candidate graph distribution satisfying
\(\rho(\cdot\mid\mathbf c)\ll
\bar\pi_k(\cdot\mid\mathbf c)\) for every target condition. Define
\begin{equation}
\label{eq:sdc_kl_components}
\begin{aligned}
\mathcal C_N(\rho)
&:=
\mathbb E_{\mathbf c}
D_{\mathrm{KL}}
\left(
\rho_N(\cdot\mid\mathbf c)
\Vert
\bar p_k(\cdot\mid\mathbf c)
\right),
\\
\mathcal C_S(\rho)
&:=
\mathbb E_{\mathbf c}
\mathbb E_{N\sim\rho_N(\cdot\mid\mathbf c)}
D_{\mathrm{KL}}
\left(
\rho_S(\cdot\mid N,\mathbf c)
\Vert
\bar\pi_k(\cdot\mid N,\mathbf c)
\right).
\end{aligned}
\end{equation}
By the KL chain rule,
\begin{equation}
\label{eq:structured_kl_chain_rule}
\mathbb E_{\mathbf c}
D_{\mathrm{KL}}
\left(
\rho(\cdot\mid\mathbf c)
\Vert
\bar\pi_k(\cdot\mid\mathbf c)
\right)
=
\mathcal C_N(\rho)+\mathcal C_S(\rho).
\end{equation}

For budgets \(\epsilon_N,\epsilon_S>0\), SDC instantiates the generic
\MODEL teacher as the solution of
\begin{equation}
\label{eq:sdc_trust_region_problem}
\begin{aligned}
\pi_k^\star
\in
\argmax_{\rho\ll\bar\pi_k}
\quad&
\mathbb E_{\mathbf c}
\mathbb E_{G\sim\rho(\cdot\mid\mathbf c)}
\left[
R_{\mathbf c}(G)
\right]
\quad 
\text{subject to}\quad&
\mathcal C_N(\rho)\leq\epsilon_N,
\quad
\mathcal C_S(\rho)\leq\epsilon_S.
\end{aligned}
\end{equation}
The two budgets separately constrain reward-driven reallocation across
\(N\) and conditional refinement within each \(N\).
For \(\tau_S>0\) and
\(n\in\operatorname{supp}\bar p_k(\cdot\mid\mathbf c)\), define the
conditional partition function
\begin{equation}
\label{eq:conditional_structural_partition}
Z_{S,k}(n,\mathbf c;\tau_S)
:=
\mathbb E_{S\sim\bar\pi_k(\cdot\mid n,\mathbf c)}
\left[
\exp\left(
\frac{R_{\mathbf c}(n,S)}{\tau_S}
\right)
\right].
\end{equation}

\begin{theorem}[Unique structured reward-tilted teacher]
\label{thm:structured_teacher}
Assume that the relevant graph and target-condition supports are finite
and that both constraints in
\cref{eq:sdc_trust_region_problem} are active at the optimum, with
positive optimal dual variables \(\tau_N,\tau_S\). Then the optimizer
is unique and factorizes, for every target condition, as
\[
\pi_k^\star(G\mid\mathbf c)
=
\pi_k^\star(N\mid\mathbf c)\,
\pi_k^\star(S\mid N,\mathbf c),
\]

where
\begin{equation}
\label{eq:structural_conditional_teacher}
\pi_k^\star(S\mid n,\mathbf c)
=
\frac{
\bar\pi_k(S\mid n,\mathbf c)
\exp\left(R_{\mathbf c}(n,S)/\tau_S\right)
}{
Z_{S,k}(n,\mathbf c;\tau_S)
},
\end{equation}
and
\begin{equation}
\label{eq:structural_marginal_teacher}
\pi_k^\star(N=n\mid\mathbf c)
=
\frac{
\bar p_k(n\mid\mathbf c)
Z_{S,k}(n,\mathbf c;\tau_S)^{\tau_S/\tau_N}
}{
Z_{N,k}(\mathbf c;\tau_N,\tau_S)
}.
\end{equation}
Here,
\begin{equation}
\label{eq:structural_marginal_partition}
Z_{N,k}(\mathbf c;\tau_N,\tau_S)
:=
\sum_{n\in\operatorname{supp}\bar p_k(\cdot\mid\mathbf c)}
\bar p_k(n\mid\mathbf c)
Z_{S,k}(n,\mathbf c;\tau_S)^{\tau_S/\tau_N}
\end{equation}
normalizes the structural marginal.
\end{theorem}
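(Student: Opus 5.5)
The plan is to treat the program in \cref{eq:sdc_trust_region_problem} as a finite-dimensional convex program and solve it through its Lagrangian. Finiteness of the graph and condition supports makes the decision variables finite vectors $\rho_N(\cdot\mid\mathbf c)$ and $\rho_S(\cdot\mid n,\mathbf c)$, each lying on a simplex restricted to the support of $\bar\pi_k$. The objective is linear in the joint $\rho$. The two constraints are not jointly convex in $\rho$ as written, because $\mathcal C_S$ weights the conditional KL by $\rho_N$.

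To fix this, I would reparametrize with the joint $\rho(N,S\mid\mathbf c)$ and write
\[
\mathcal C_S(\rho)=\mathbb E_{\mathbf c}\Bigl[D_{\mathrm{KL}}(\rho\Vert\bar\pi_k)-D_{\mathrm{KL}}(\rho_N\Vert\bar p_k)\Bigr],
\]
using \cref{eq:structured_kl_chain_rule}. Convexity is then delicate, so instead I would work directly in the factored variables. With active constraints and positive multipliers $\tau_N^{-1},\tau_S^{-1}$, I form the Lagrangian
\[
\mathbb E_{\mathbf c}\sum_n\rho_N(n)\Bigl[\mathbb E_{\rho_S(\cdot\mid n)}R_{\mathbf c}-\tau_S D_{\mathrm{KL}}\bigl(\rho_S(\cdot\mid n)\Vert\bar\pi_k(\cdot\mid n)\bigr)\Bigr]-\tau_N\,\mathcal C_N(\rho).
\]
Here I use multipliers $\lambda_N,\lambda_S$ rescaled so that $\tau=1/\lambda$ after dividing. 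More precisely, the Lagrangian is $\mathbb E R-\lambda_N\mathcal C_N-\lambda_S\mathcal C_S$. I would identify $\tau_S=1/\lambda_S$ and $\tau_N=\lambda_N/\lambda_S\cdot\tau_S$, fixing the normalization so the exponent matches $\tau_S/\tau_N$.

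The maximization then proceeds in two stages. For fixed $n,\mathbf c$, the inner problem $\max_{\rho_S}\mathbb E_{\rho_S}R-\tau_S D_{\mathrm{KL}}(\rho_S\Vert\bar\pi_k(\cdot\mid n))$ is strictly concave. By the Gibbs variational principle (Donsker--Varadhan), its unique maximizer is the tilted conditional \cref{eq:structural_conditional_teacher}, with optimal value $\tau_S\log Z_{S,k}(n,\mathbf c;\tau_S)$. Substituting back, the outer problem becomes
\[
\max_{\rho_N}\sum_n\rho_N(n)\,\tau_S\log Z_{S,k}(n)-\tau_N D_{\mathrm{KL}}(\rho_N\Vert\bar p_k).
\]
This is again a Gibbs problem, and dividing by $\tau_N$ gives the unique maximizer $\bar p_k(n)\,Z_{S,k}(n)^{\tau_S/\tau_N}/Z_{N,k}$. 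That is \cref{eq:structural_marginal_teacher}. Because the conditions decouple, this argument runs condition-wise, and the factorization is immediate.

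The main obstacle is justifying that the Lagrangian maximizer is the constrained optimum and that it is unique. The constraint $\mathcal C_S\le\epsilon_S$ is bilinear-like in $(\rho_N,\rho_S)$, so standard convex duality does not apply verbatim. I would proceed as follows.

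\begin{enumerate}
\item Take as given, per the hypothesis, that the optimal dual variables exist and are positive.
\item Show sufficiency directly. For any feasible $\rho$, $\mathbb E_\rho R\le L(\rho)+\lambda_N\epsilon_N+\lambda_S\epsilon_S\le L(\rho^\star)+\lambda_N\epsilon_N+\lambda_S\epsilon_S=\mathbb E_{\rho^\star}R$. The last equality uses complementary slackness, since both constraints are active at $\rho^\star$.
\item Obtain uniqueness from the equality case. Any other optimizer must also maximize $L$. The two-stage maximization of $L$ has a unique solution: the inner problem is strictly concave in $\rho_S(\cdot\mid n)$ for each $n$ with $\rho_N(n)>0$, and the outer problem is strictly concave in $\rho_N$.
\item Handle values of $n$ with $\rho_N(n)=0$. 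These do not occur, because the outer Gibbs solution has full support on $\operatorname{supp}\bar p_k$.
\end{enumerate}
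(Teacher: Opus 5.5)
\textbf{Gap: step 2 assumes what it must prove.} Your two-stage Gibbs computation of the Lagrangian maximizer $\rho^\star$ matches the paper, and so does uniqueness via full support of the outer tilt on $\operatorname{supp}\bar p_k$. The gap is in step 2, where all of the link to the constrained problem sits. The final equality $L(\rho^\star)+\lambda_N\epsilon_N+\lambda_S\epsilon_S=\mathbb E_{\rho^\star}R$ requires $\mathcal C_N(\rho^\star)=\epsilon_N$ and $\mathcal C_S(\rho^\star)=\epsilon_S$. Even the feasibility of $\rho^\star$ is needed for the chain. The hypothesis, however, only says the constraints are active at the \emph{constrained optimum} $\pi_k^\star$. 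Identifying $\pi_k^\star$ with the \emph{Lagrangian maximizer} $\rho^\star$ is exactly what the theorem claims, so the step is circular. Having set convexity aside, you give no other mechanism, neither a saddle point nor dual stationarity, that makes $\rho^\star$ feasible, let alone tight.

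\textbf{Convexity does hold.} The premise that convex duality is unavailable is mistaken. In the joint variables,
$\rho_N(n)\,D_{\mathrm{KL}}(\rho_S(\cdot\mid n)\Vert\bar\pi_k(\cdot\mid n))=\sum_s\rho(n,s)\log\frac{\rho(n,s)}{\rho_N(n)\bar\pi_k(s\mid n)}$.
Since $\rho_N(n)\bar\pi_k(s\mid n)$ is linear in $\rho$, the log-sum inequality (the perspective form) makes $\mathcal C_S$ convex. $\mathcal C_N$ is convex because $\rho_N$ is linear in $\rho$. The paper uses exactly this. The objective is linear, the constraints are convex, and $\rho=\bar\pi_k$ is strictly feasible, so Slater's condition gives strong duality. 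Hence every primal optimum maximizes the Lagrangian at the optimal multipliers and must equal its unique maximizer.

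\textbf{Alternative repair via the dual.} If you prefer to keep your convexity-free sufficiency argument, close the gap through the dual function instead. The function $d(\tau_N,\tau_S)=\tau_N\epsilon_N+\tau_S\epsilon_S+\mathbb E_{\mathbf c}\,\tau_N\log Z_{N,k}(\mathbf c;\tau_N,\tau_S)$ is convex. The Lagrangian maximizer is unique on a compact product of simplices, so Danskin's theorem gives $\partial_{\tau_N}d=\epsilon_N-\mathcal C_N(\rho^\star)$ and $\partial_{\tau_S}d=\epsilon_S-\mathcal C_S(\rho^\star)$. Positive optimal dual variables are interior minimizers of $d$, so both partials vanish. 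Thus $\rho^\star$ is feasible with both budgets tight, and your inequality chain and equality-case uniqueness argument then go through.

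\textbf{Minor: multiplier bookkeeping.} The inner problem $\mathbb E_{\rho_S}R-\lambda_S D_{\mathrm{KL}}$ tilts by $\exp(R/\lambda_S)$, and the outer one tilts by $Z_{S,k}^{\lambda_S/\lambda_N}$. So $\tau_S=\lambda_S$ and $\tau_N=\lambda_N$, as in your displayed Lagrangian, not $\tau_S=1/\lambda_S$.
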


The conditional factor in
\cref{eq:structural_conditional_teacher} tilts molecular realizations
within each \(N\), whereas
\cref{eq:structural_marginal_teacher} reallocates the valid-conditioned
reference mass \(\bar p_k(n\mid\mathbf c)\) according to the aggregate
reward desirability
\(Z_{S,k}(n,\mathbf c;\tau_S)^{\tau_S/\tau_N}\).
The temperature \(\tau_S\) controls reward concentration within each
\(N\), while \(\tau_S/\tau_N\) directly controls the strength of
cross-\(N\) reallocation. The proof is provided in
\cref{app:structural_teacher}.
\paragraph{SDC projection weight.}
Substituting \(\mu_k=\pi_k^\star\) into
\cref{eq:generic_teacher_ratio} gives
\begin{equation}
\label{eq:structured_reward_weight}
\begin{aligned}
w_k(G,\mathbf c)
&:=
w_k^{\pi_k^\star}(G,\mathbf c)
=
\frac{
\pi_k^\star(G\mid\mathbf c)
}{
\bar\pi_k(G\mid\mathbf c)
}
=
\frac{
\exp\left(R_{\mathbf c}(G)/\tau_S\right)
Z_{S,k}(N,\mathbf c;\tau_S)^{\tau_S/\tau_N-1}
}{
Z_{N,k}(\mathbf c;\tau_N,\tau_S)
}.
\end{aligned}
\end{equation}
By Prop.~\ref{prop:forward_denoising_matching}, the same weight projects the
SDC teacher onto both native learning objectives. We henceforth write
\[
\mathcal L_{S,k}
:=
\mathcal L_{S,k}^{\pi_k^\star},
\qquad
\mathcal L_{N,k}
:=
\mathcal L_{N,k}^{\pi_k^\star}.
\]
\begin{corollary}[Condition-wise teacher improvement]
\label{cor:structured_teacher_improvement}
Under the assumptions of \cref{thm:structured_teacher}, define
$$
\Delta_k(\mathbf c)
:=
\mathbb E_{G\sim\pi_k^\star(\cdot\mid\mathbf c)}
\left[R_{\mathbf c}(G)\right]
-
\mathbb E_{G\sim\bar\pi_k(\cdot\mid\mathbf c)}
\left[R_{\mathbf c}(G)\right].
$$

Then, for every target condition \(\mathbf c\),
\begin{equation}
\label{eq:conditionwise_reward_improvement}
\begin{aligned}
\Delta_k(\mathbf c)
&\geq
\tau_N
D_{\mathrm{KL}}
\left(
\pi_k^\star(N\mid\mathbf c)
\Vert
\bar p_k(N\mid\mathbf c)
\right)
+
\tau_S
\mathbb E_{N\sim\pi_k^\star(N\mid\mathbf c)}
D_{\mathrm{KL}}
\left(
\pi_k^\star(S\mid N,\mathbf c)
\Vert
\bar\pi_k(S\mid N,\mathbf c)
\right)
\geq 0,
\end{aligned}
\end{equation}
with strict reward improvement whenever
\(\pi_k^\star(\cdot\mid\mathbf c)
\neq\bar\pi_k(\cdot\mid\mathbf c)\).
Moreover, because both constraints are active,
\begin{equation}
\label{eq:structured_reward_improvement}
\mathbb E_{\mathbf c}
\left[
\Delta_k(\mathbf c)
\right]
\geq
\tau_N\epsilon_N+\tau_S\epsilon_S
\geq 0.
\end{equation}
\end{corollary}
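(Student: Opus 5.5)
We prove the inequality by writing the closed form of \cref{thm:structured_teacher} as a log-density ratio and taking its expectation under \(\pi_k^\star\). Fix \(\mathbf c\) and abbreviate \(\bar p=\bar p_k(\cdot\mid\mathbf c)\), \(\bar\pi_n=\bar\pi_k(\cdot\mid n,\mathbf c)\), \(Z_S(n)=Z_{S,k}(n,\mathbf c;\tau_S)\) and \(Z_N=Z_{N,k}(\mathbf c;\tau_N,\tau_S)\).

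Taking logarithms in \cref{eq:structural_conditional_teacher} gives
\(R_{\mathbf c}(n,S)=\tau_S\log\frac{\pi_k^\star(S\mid n,\mathbf c)}{\bar\pi_n(S)}+\tau_S\log Z_S(n)\).
Taking logarithms in \cref{eq:structural_marginal_teacher} gives
\(\tau_S\log Z_S(n)=\tau_N\log\frac{\pi_k^\star(n\mid\mathbf c)}{\bar p(n)}+\tau_N\log Z_N\).
Substituting the second identity into the first, we average over \(G\sim\pi_k^\star(\cdot\mid\mathbf c)\). This yields the exact identity
\[
\mathbb E_{\pi_k^\star}[R_{\mathbf c}]=\tau_N D_{\mathrm{KL}}(\pi_k^\star(N\mid\mathbf c)\Vert\bar p)+\tau_S\,\mathbb E_{N\sim\pi_k^\star}D_{\mathrm{KL}}(\pi_k^\star(S\mid N,\mathbf c)\Vert\bar\pi_N)+\tau_N\log Z_N .
\]

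It remains to show \(\tau_N\log Z_N\ge\mathbb E_{\bar\pi_k}[R_{\mathbf c}]\). This follows from two applications of Jensen's inequality. Since \(\log\) is concave, \(\tau_S\log Z_S(n)\ge\mathbb E_{\bar\pi_n}[R_{\mathbf c}(n,S)]\) for each \(n\). Applying the same argument to \(Z_N=\mathbb E_{\bar p}[Z_S(N)^{\tau_S/\tau_N}]\) gives
\[
\tau_N\log Z_N\ge\mathbb E_{N\sim\bar p}\,\tau_S\log Z_S(N)\ge\mathbb E_{\bar p}\mathbb E_{\bar\pi_N}[R_{\mathbf c}]=\mathbb E_{\bar\pi_k}[R_{\mathbf c}],
\]
where the last equality is the factorization \(\bar\pi_k=\bar p\,\bar\pi_k(S\mid N,\mathbf c)\). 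Subtracting this from the identity above gives the first inequality in \cref{eq:conditionwise_reward_improvement}. Each KL term is nonnegative and \(\tau_N,\tau_S>0\), which gives the second inequality.

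For strictness, suppose \(\pi_k^\star(\cdot\mid\mathbf c)\neq\bar\pi_k(\cdot\mid\mathbf c)\). By the chain rule \cref{eq:structured_kl_chain_rule} applied at this single \(\mathbf c\), the two KL terms cannot both vanish, so their positively weighted sum is strictly positive and hence \(\Delta_k(\mathbf c)>0\). Finiteness of the supports makes every expectation a finite sum, so no integrability issues arise.

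For \cref{eq:structured_reward_improvement}, average the condition-wise bound over \(\mathbf c\). The averaged KL terms are exactly \(\mathcal C_N(\pi_k^\star)\) and \(\mathcal C_S(\pi_k^\star)\) from \cref{eq:sdc_kl_components}. Because both constraints are active at the optimum, these equal \(\epsilon_N\) and \(\epsilon_S\), which gives \(\mathbb E_{\mathbf c}[\Delta_k(\mathbf c)]\ge\tau_N\epsilon_N+\tau_S\epsilon_S\ge0\).

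The main obstacle is the Jensen step on \(Z_N\). It relies on the exponent \(\tau_S/\tau_N\) being positive, which holds because both dual variables are positive. The support of \(\pi_k^\star\) also needs care: the logarithmic identities hold only on \(\operatorname{supp}\bar\pi_k\). This support equals that of \(\pi_k^\star\), since the exponential tilts in \cref{eq:structural_conditional_teacher} and \cref{eq:structural_marginal_teacher} are strictly positive.
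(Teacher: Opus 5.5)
Your proof is correct and follows the paper's argument: the same log-ratio identity from the closed form of \cref{thm:structured_teacher}, the same chain-rule argument for strictness, and the same averaging step using the active constraints. The only difference is how you show $\tau_N\log Z_{N,k}-\mathbb E_{\bar\pi_k}[R_{\mathbf c}]\ge 0$: you use two nested Jensen steps, whereas the paper also takes the expectation of the identity under $\bar\pi_k$ and gets this gap exactly as the reverse-KL terms $\tau_N D_{\mathrm{KL}}(\bar p_k\Vert\pi_k^\star(N\mid\mathbf c))+\tau_S\mathbb E_{N\sim\bar p_k}D_{\mathrm{KL}}(\bar\pi_k(\cdot\mid N,\mathbf c)\Vert\pi_k^\star(\cdot\mid N,\mathbf c))$, which it then drops; incidentally, your Jensen step needs only $\tau_N>0$, not positivity of $\tau_S/\tau_N$.
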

Since
\(\pi_k^\star\ll\bar\pi_k\),
the exact teacher also satisfies
$
\pi_k^\star(\mathcal G_{\mathrm{val}}\mid\mathbf c)=1
$
for every target condition. These reward and support guarantees apply
to the exact teacher. 
Its population projections are characterized by
Prop.~\ref{prop:forward_denoising_matching}. The proof and the corresponding
improvement identity are provided in
\cref{app:structured_teacher_improvement}.

\paragraph{Shared-temperature special case.}
If \(\tau_N=\tau_S=\tau\), multiplying
\cref{eq:structural_conditional_teacher,eq:structural_marginal_teacher}
recovers the standard joint reward tilt:
\begin{equation}
\label{eq:shared_temperature_target}
\pi_k^\star(G\mid\mathbf c)
=\frac{
\bar\pi_k(G\mid\mathbf c)
\exp\left(R_{\mathbf c}(G)/\tau\right)
}{
\mathbb E_{G'\sim\bar\pi_k(\cdot\mid\mathbf c)}
\left[
\exp\left(R_{\mathbf c}(G')/\tau\right)
\right]
}.
\end{equation}
Thus, ordinary reward tilting is the shared-temperature special case of
SDC.

\paragraph{Online update.}
At iteration \(k\), \MODEL samples candidate graphs from
\(\pi_k\), retains chemically valid candidates, evaluates their
terminal rewards, and estimates the SDC teacher weights \(w_k\).
It then updates \(\theta\) and \(\phi\) using empirical estimates of
\(\mathcal L_{S,k}\) and \(\mathcal L_{N,k}\), respectively. The
resulting generator factorizes as
\[
\pi_{k+1}(G\mid\mathbf c)
=
p_{\phi_{k+1}}(N\mid\mathbf c)\,
\pi_{\theta_{k+1}}(S\mid N,\mathbf c),
\]
and supplies the candidate distribution for the next iteration.
Finite-sample normalization, proposal correction, and dual estimation
are described in \cref{app:structured_projection}.

\section{Experiments}
\label{sec:experiments}
\textbf{RQ1}: We evaluate the controllable generation performance of \MODEL against molecular optimization, graph generation, and diffusion post-training baselines in \cref{sec:rq1}.
\textbf{RQ2}: We examine its out-of-distribution generalization and structural KL allocation in \cref{sec:rq2}.

\subsection{Experimental Setup}
\label{sec:exp-setup}
We evaluate \MODEL on two multi-property inverse-design benchmarks spanning small molecules and polymers.
We additionally report \textbf{SFT (Shared Init.)}, the common
supervised fine-tuned checkpoint used to initialize CoMole~\citep{comole} and both
\MODEL variants.
\textbf{\MODELNOSDC{}} uses the same forward-denoising updates as \MODEL, but without SDC, isolating the contribution of structural distribution control.
Model performance is validated across up to eleven metrics, including validity, distribution coverage, and conditional accuracy. 
Full dataset statistics and implementation details are provided in~\cref{app:dataset_details,app:experimental_details}.

\paragraph{Datasets}
For materials, we adopt the polymer dataset introduced in~\citet{graphdit}, with four target properties: three gas permeabilities (O$_2$Perm, CO$_2$Perm, N$_2$Perm) and synthetic accessibility score (SAscore) ~\citep{sascore}.
For molecules, we use QM9~\citep{qm9} and consider six target properties: 
dipole moment ($\mu$),
isotropic polarizability ($\alpha$), 
the energies of the highest occupied
and lowest unoccupied molecular orbitals
($\varepsilon_{\mathrm{HOMO}}$ and $\varepsilon_{\mathrm{LUMO}}$), 
heat capacity at constant volume ($C_v$), and SAscore.

\paragraph{Evaluation}
We use a fixed 6:2:2 split of each conditional benchmark into train/val/test sets.
Evaluations are conducted on 10,000 generated examples and report:
(1) post-processed validity (\textit{Valid.}), with raw validity measured before any rule checking or repair shown in parentheses;
(2) internal diversity (Div.); 
(3) fragment-based similarity with the reference set (Sim.);
(4)  Fr\'echet ChemNet Distance with the reference set (Dis.)~\citep{preuer2018frechet}; 
and MAE between conditioning targets and evaluated properties:
(5) synthetic accessibility (Synth.); 
(6)–(8)/(10) task-specific properties (Property). 
The oracle is a random forest trained on all task-related molecules~\citep{gao2022sample}, while SAscore is computed directly from molecular structure.
Oracle fitting accuracy and robustness to alternative evaluators are analyzed in \cref{app:oracle}.

\paragraph{Baselines}
We compare against a broad set of baselines spanning molecular optimization (GraphGA~\citep{graphga}, LSTM-HC~\citep{lstmhc}, MARS~\citep{mars}), diffusion-based generators (DiGress~\citep{digress}, GDSS~\citep{gdss}, GraphDiT~\citep{graphdit}, DeFoG~\citep{qin2024defog}, CSGD~\citep{csgd}, MELD~\citep{meld}), and methods with post-training (GDPO~\citep{gdpo}, GraphGRPO~\citep{graphgrpo}, VIDD~\citep{vidd}, CoMole~\citep{comole}) and an inference-time method (TreeDiff~\citep{treediff}).

\subsection{RQ1: Conditional Molecular Generation}
\label{sec:rq1}
\begin{table*}[t]
\centering
\small
\setlength{\tabcolsep}{5pt}
\caption{
\textbf{Multi-Conditional Generation of 10K Polymers:} 
Results on four properties (synthetic score, gas permeability for O$_2$, N$_2$, CO$_2$). 
MAE is computed between the input conditions and evaluated properties of the generated polymers.
Best results are in \BEST{red}.
}
\label{tab:main_results_gas}
\resizebox{\textwidth}{!}{%
\begin{tabular}{llcccccccc}
\toprule
\multirow{2}{*}{\textbf{Model}}
& \multicolumn{1}{c}{\textbf{Validity}}
& \multicolumn{3}{c}{\textbf{Distribution Learning}}
& \multicolumn{5}{c}{\textbf{Condition Control}} \\
\cmidrule (lr){2-2} \cmidrule (lr){3-5} \cmidrule (lr){6-10}
& Valid. (raw) $\uparrow$
& Div. $\uparrow$
& Sim. $\uparrow$
& Dis. $\downarrow$
& Synth. $\downarrow$
& O$_2$Perm $\downarrow$
& N$_2$Perm $\downarrow$
& CO$_2$Perm $\downarrow$
& Avg. MAE $\downarrow$ \\
\midrule
GraphGA             & \BEST{1.000} (N.A.)     & 0.883 & 0.927 & 9.188 & 1.331 & 1.984 & 2.290 & 1.949 & 1.888 \\
LSTM-HC             & 0.991 (N.A.)     & 0.894 & 0.932 & 12.122 & 1.413 & 1.381 & 1.641 & 1.356 & 1.448 \\
MARS                & \BEST{1.000} (N.A.)     & 0.838 & 0.928 & 7.562 & 1.166 & 1.576 & 1.833 & 1.607 & 1.546 \\
DiGress             & 0.839 (0.117) & 0.918 & 0.515 & 20.256 & 2.104 & 1.543 & 1.861 & 1.477 & 1.746 \\
GDSS                & \BEST{1.000} (0.003) & 0.857 & 0.004 & 37.627 & 1.164 & 1.260 & 1.477 & 1.285 & 1.297 \\
DeFoG               & 0.326 (0.003) & 0.934 & 0.168 & 24.859 & 1.548 & 1.178 & 1.366 & 1.186 & 1.320 \\
GraphDiT            & 0.826 (0.855) & 0.890 & 0.891 & 6.643 & 1.297 & 0.763 & 0.891 & 0.771 & 0.931 \\
MELD                & 0.135 (0.042) & 0.897 & 0.406 & 28.876 & 1.768 & 1.305 & 1.517 & 1.280 & 1.468 \\
CSGD                & 0.282 (0.212) & 0.862 & 0.651 & 20.397 & 1.171 & 1.154 & 1.342 & 1.148 & 1.204 \\
\midrule
TreeDiff            & 0.985 (0.005) & \BEST{0.935} & 0.072 & 30.719 & 2.411 & 1.232 & 1.476 & 1.221 & 1.585 \\
VIDD                & \BEST{1.000} (0.008) & 0.920 & 0.284 & 26.819 & 2.291 & 1.513 & 1.802 & 1.461 & 1.767 \\
GDPO                & 0.354 (0.077) & 0.921 & 0.489 & 24.422 & 1.424 & 1.357 & 1.606 & 1.325 & 1.428 \\
GraphGRPO           & 0.251 (0.020) & 0.871 & 0.347 & 26.221 & 2.972 & 1.818 & 2.229 & 1.730 & 2.187 \\
CoMole         & 0.918 (0.838) & 0.845 & \BEST{0.944} & 9.410 & 0.389 & 0.816 & 0.946 & 0.819 & 0.743 \\
\midrule
SFT (Shared Init.)    & 0.921 (0.898) & 0.863 & 0.930 & \BEST{6.328} & 0.370 & 0.901 & 1.013 & 0.899 & 0.796 \\
\MODELNOSDC{}  (ours) & 0.946 (0.909) & 0.845 & 0.902 & 9.427 & 0.231 & 0.668 & 0.766 & 0.691 & 0.589 \\
\MODEL  (ours)        & 0.991 (\BEST{0.986}) & 0.834 & 0.923 & 9.642 & \BEST{0.183} & \BEST{0.594} & \BEST{0.664} & \BEST{0.597} & \BEST{0.509} \\
\bottomrule
\end{tabular}%
}
\end{table*}
\begin{table*}[t]
\centering
\small
\setlength{\tabcolsep}{5pt}
\caption{
\textbf{Multi-Conditional Generation of 10K QM9 Molecules:} 
Results on six properties (synthetic score and five QM9 properties). 
MAE is computed between the input conditions and evaluated properties of the generated molecules.
Overall Rank aggregates property-wise MAE ranks across units and scales.
Best results are in \BEST{red}.
}
\label{tab:main_results_mol}
\resizebox{\textwidth}{!}{%
\begin{tabular}{llcccccccccc}
\toprule
\multirow{2}{*}{\textbf{Model}}
& \multicolumn{1}{c}{\textbf{Validity}}
& \multicolumn{3}{c}{\textbf{Distribution Learning}}
& \multicolumn{7}{c}{\textbf{Condition Control}} \\
\cmidrule (lr){2-2} \cmidrule (lr){3-5} \cmidrule (lr){6-12}
& Valid. (raw) $\uparrow$
& Div. $\uparrow$
& Sim. $\uparrow$
& Dis. $\downarrow$
& Synth. $\downarrow$
& $\mu$ $\downarrow$
& $\alpha$ $\downarrow$
& $\varepsilon_{\mathrm{HOMO}}$ $\downarrow$
& $\varepsilon_{\mathrm{LUMO}}$ $\downarrow$
& $C_v$ $\downarrow$
& Overall Rank $\downarrow$ \\
\midrule
GraphGA             & \BEST{1.000} (N.A.) & 0.927 & 0.922 & 0.777 & 0.946 & 0.702 & 4.903 & 0.019 & 0.020 & 2.121 & 8 \\
LSTM-HC             & 0.990 (N.A.)     & 0.916 & \BEST{0.957} & 1.400 & 1.016 & 1.478 & 8.489 & 0.022 & 0.052 & 4.300 & 16 \\
MARS                & \BEST{1.000} (N.A.) & 0.931 & 0.770 & 5.731 & 1.037 & 0.844 & 7.597 & 0.014 & 0.021 & 2.961 & 10 \\
DiGress             & 0.915 (0.849) & 0.918 & 0.941 & 4.696 & 1.612 & 0.912 & 3.897 & 0.012 & 0.018 & 1.900 & 7\\
GDSS                & \BEST{1.000} (0.737) & 0.908 & 0.733 & 6.559 & 1.340 & 1.804 & 10.371 & 0.021 & 0.052 & 3.586 &16 \\
DeFoG               & 0.903 (0.578) & \BEST{0.932} & 0.890 & 1.521 & 0.823 & 1.038 & 7.319 & 0.016 & 0.024 & 3.762 & 12\\
GraphDiT            & 0.913 (0.720) & 0.924 & 0.875 & 2.407 & 0.701 & 1.063 & 5.306 & 0.015 & 0.025 & 2.529 & 9 \\
MELD                & 0.823 (0.645) & 0.916 & 0.905 & 1.377 & 0.893 & 1.152 & 7.077 & 0.020 & 0.030 & 3.112 & 13 \\
CSGD                & 0.971 (0.875) & 0.907 & 0.936 & 2.147 & 0.823 & 1.119 & 5.664 & 0.017 & 0.031 & 2.647 & 11 \\
\midrule
TreeDiff & 0.974 (0.088) & 0.911 & 0.662 & 4.394 & 1.264 & 1.477 & 5.480 & 0.019 & 0.054 & 4.252 & 14 \\
VIDD                & \BEST{1.000} (0.550) & 0.921 & 0.790 & 6.157 & 1.804 & 1.395 & 8.223 & 0.020 & 0.057 & 3.700 & 15 \\
GDPO                & 0.910 (0.860) & 0.917 & 0.947 & 0.710 & 0.542 & 0.908 & 3.748 & 0.012 & 0.018 & 1.886 & 4 \\
GraphGRPO           & 0.976 (0.841) & 0.911 & 0.904 & 3.114 & 0.572 & 0.862 & 3.528 & 0.011 & 0.018 & 1.992 & 3 \\
CoMole              & 0.962 (0.866) & 0.916 & 0.927 & 4.231 & 0.477 & 0.882 & 4.322 & 0.012 & 0.021 & 1.929 & 5 \\
\midrule
SFT (Shared Init.)         & 0.895 (0.753) & 0.917 & 0.942 & \BEST{0.550} & 0.537 & 0.866 & 4.570 & 0.013 & 0.022 & 2.073 & 6 \\
\MODELNOSDC{}  (ours) & 0.973 (0.921) & 0.917 & 0.907 & 0.689 & 0.400 & 0.782 & 3.411 & 0.010 & 0.015 & 1.622 & 2 \\
\MODEL  (ours)        & 0.994 (\BEST{0.960}) & 0.917 & 0.902 & 0.615 & \BEST{0.359} & \BEST{0.697} & \BEST{3.307} & \BEST{0.009} & \BEST{0.015} & \BEST{1.473} & \BEST{1} \\
\bottomrule
\end{tabular}%
}
\end{table*}

\textbf{As shown in~\cref{tab:main_results_gas,tab:main_results_mol}}, \MODEL \textbf{achieves the best controllability on all target properties and maintains validity over 0.99}.
\paragraph{Chemical Validity.}
Rule-filtered validity can obscure the intrinsic feasibility of generated structures. 
For example, in \cref{tab:main_results_gas}, GDSS and VIDD both report $1.0$ validity, yet their raw validity collapses to $0.003$ and $0.008$, respectively. 
The post-training baselines GDPO and GraphGRPO remain weak under both evaluations, attaining only $0.354 (0.077)$ and $0.251 (0.020)$ validity.
In contrast, \MODELNOSDC{} achieves raw validity of $0.909$ on Gas and $0.921$ on QM9, which \MODEL further improves to $0.986$ and $0.960$. 
Forward distribution matching preserves intrinsic chemical feasibility under stronger property optimization, with SDC providing a further consistent gain.

\paragraph{Distribution Learning.}
A central concern in property optimization is whether improved conditional accuracy comes at the cost of drifting away from the learned data distribution.
On the polymer benchmark, SFT (Shared Init.) and GraphDiT achieve low distribution distances of $6.328$ and $6.643$, while several optimization baselines exhibit substantial distribution shift, with fragment similarity below $0.5$ and distribution distance above $20$. 
\MODEL maintains high fragment similarity ($0.923$) with a moderate distribution distance ($9.642$).
On QM9, \MODEL retains a low distribution distance of $0.615$, close to the shared SFT initialization ($0.550$), while maintaining high diversity ($0.917$). These results suggest that \MODEL improves controllability without severe distributional drift or mode collapse. 

\paragraph{Condition Controllability.}
\MODEL provides the strongest overall condition control on both benchmarks. 
In \cref{tab:main_results_gas}, starting from the shared SFT initialization ($0.796$), \MODELNOSDC{} reduces Avg.~MAE to $0.589$, while \MODEL further reduces it to $0.509$, outperforming the strongest baseline CoMole ($0.743$) by $31.5\%$. 
\MODEL achieves the lowest MAE for all four properties.
In \cref{tab:main_results_mol}, the same progression improves Overall Rank from $6$ for SFT to $2$ and $1$ for \MODELNOSDC{} and \MODEL, ahead of GraphGRPO and GDPO. 
\MODEL obtains the best MAE on all six properties, including SAS ($0.359$ vs.\ CoMole's $0.477$), $\alpha$ ($3.307$ vs.\ GraphGRPO's $3.528$), and $C_v$ ($1.473$ vs.\ GDPO's $1.886$). 
Together, these results isolate the gains from forward distribution matching and the additional contribution of SDC. SDC further improves all four polymer errors and five of six QM9 errors while matching the remaining one.

\subsection{RQ2: Ablation Studies and Model Analysis}
\label{sec:rq2}

\paragraph{Generalization.}
We evaluate compositional out-of-distribution generalization on 200 unseen QM9 property targets. 
We partition each of the five quantum properties into low, medium, and high bins using training-set quantiles. 
We retain joint bin combinations occurring in fewer than $0.1\%$ of training molecules and select the eight rarest eligible signatures.
We randomly sample 25 targets from each combination and include SAScore as the sixth condition (see \cref{app:dataset_details}).
Without further training, \MODEL achieves the lowest MAE on every property while maintaining high raw validity~(\cref{tab:ood}).
\Cref{fig:ood} jointly embeds target vectors and per-target centroids of generated properties. 
\MODEL more consistently follows the target modes, whereas competing methods exhibit larger deviations or incomplete coverage of the requested property space.
These results suggest that \MODEL can compose learned property controls to generalize to rare or unseen joint property combinations.
\begin{figure}[tbp]
  \centering
  \includegraphics[width=\linewidth]{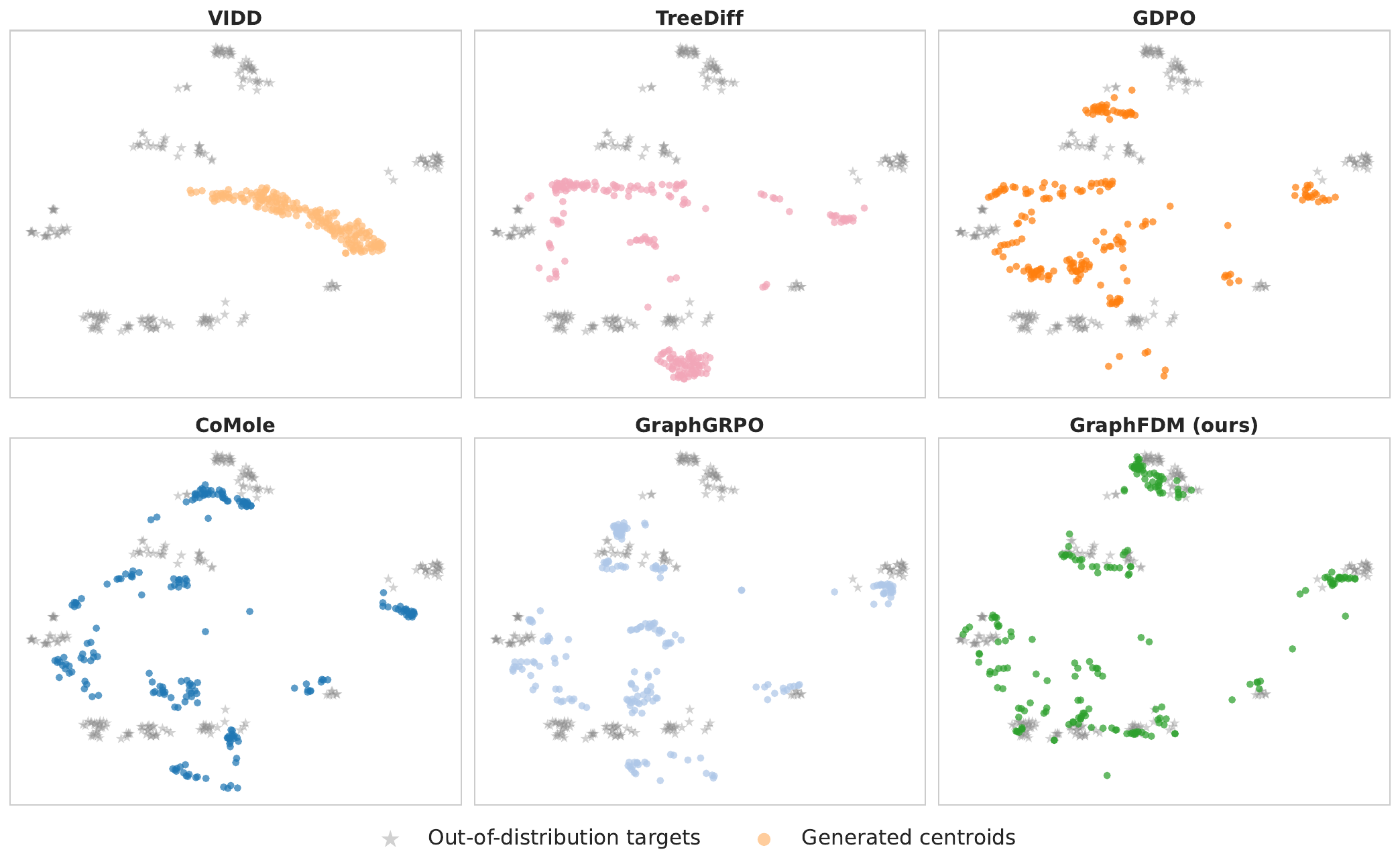}
    \caption{\textbf{Out-of-distribution multi-property alignment.}
    Joint t-SNE visualization of normalized target vectors (gray stars) and per-target centroids of valid generated molecules (colored circles).
    \MODEL more consistently follows the target modes than competing post-training methods.}
  \label{fig:ood}
\end{figure}
\begin{table*}[t]
\centering
\small
\setlength{\tabcolsep}{5pt}
\caption{\textbf{Out-of-distribution Multi-Conditional Generation of 10K QM9 Molecules:}
We evaluate post-training methods on 200 unseen targets with rare property combinations.
Best results are in \BEST{red}.
}
\label{tab:ood}
\resizebox{\textwidth}{!}{%
\begin{tabular}{llccccccccc}
\toprule
\multirow{2}{*}{\textbf{Model}}
& \multicolumn{1}{c}{\textbf{Validity}}
& \multicolumn{3}{c}{\textbf{Distribution Learning}}
& \multicolumn{6}{c}{\textbf{Condition Control}} \\
\cmidrule (lr){2-2} \cmidrule (lr){3-5} \cmidrule (lr){6-11}
& Valid. (raw) $\uparrow$
& Div. $\uparrow$
& Sim. $\uparrow$
& Dis. $\downarrow$
& Synth. $\downarrow$
& $\mu$ $\downarrow$
& $\alpha$ $\downarrow$
& $\varepsilon_{\mathrm{HOMO}}$ $\downarrow$
& $\varepsilon_{\mathrm{LUMO}}$ $\downarrow$
& $C_v$ $\downarrow$\\
\midrule

VIDD
& \BEST{1.000} (0.535) & 0.923 & 0.664 & 10.985
& 2.002 & 1.262 & 10.708 & 0.032 & 0.059 & 4.426  \\

TreeDiff
& 0.944 (0.038) & \BEST{0.934} & 0.687 & 5.488 & 0.996 & 1.229 & 9.879 & 0.021 & 0.031 & 3.372 \\

GDPO
& 0.804 (0.717) & 0.931 & \BEST{0.829} & 3.496
& 0.574 & 1.227 & 6.875 & 0.019 & 0.030 & 2.438  \\

GraphGRPO
& 0.955 (0.705) & 0.927 & 0.787 & 9.873
& 0.692 & 1.163 & 7.127 & 0.029 & 0.026 & 2.681  \\

CoMole
& 0.875 (0.716) & 0.926 & 0.744 & \BEST{3.207}
& 0.642 & 1.210 & 6.688 & 0.019 & 0.030 & 2.792  \\

\MODEL\ (ours)
& 0.973 (\BEST{0.892}) & 0.920 & 0.701 & 3.349
& \BEST{0.449} & \BEST{1.059} & \BEST{5.275}
& \BEST{0.015} & \BEST{0.023} & \BEST{1.870} \\
\bottomrule
\end{tabular}%
}
\end{table*}

\paragraph{KL Budget Allocation in SDC.}
\label{sec:sdc_geometry}
Following \cref{eq:sdc_kl_components,eq:structured_kl_chain_rule}, we study how to divide a fixed KL budget between graph-size reallocation and molecular refinement at each size.
Here, $\mathcal C_N:\mathcal C_S$ denotes the ratio of KL changes in the structural marginal over graph sizes and the conditional molecular distributions.
We compare a joint constraint that allows an adaptive split with separate constraints assigning budgets in ratios of $1{:}1$ and $1{:}2$.
All variants use the same total KL radius and training configuration.
As shown in~\cref{tab:sdc_geometry}, the joint constraint yields
$\mathcal C_N:\mathcal C_S\approx2.5{:}1$, 
allocating more distributional change to the structural marginal.
Balancing the allocation at
$1{:}1$ reduces Avg.~MAE from $0.590$ to $0.509$ ($13.7\%$), with
slightly improved validity and Avg.~Gas MAE dropping from $0.725$ to $0.618$. 
The $1{:}2$ allocation also reduces Avg.~MAE relative to the joint constraint but is less effective than $1{:}1$.
We therefore use the $1{:}1$ allocation in our main experiments.
\begin{table*}[t]
\centering
\caption{
\textbf{KL budget allocation in SDC:}
$\mathcal C_N:\mathcal C_S$ denotes the ratio of structural-marginal to
conditional-structure KL change.
Avg. Gas averages the three gas-permeability MAEs. Best results are in \textbf{bold}.}
\label{tab:sdc_geometry}
\resizebox{\textwidth}{!}{%
\small
\setlength{\tabcolsep}{5pt}
\begin{tabular}{l c c ccc cc}
\toprule
\multirow{2}{*}{\textbf{Constraint}}
& \multirow{2}{*}{\textbf{$\mathcal C_N:\mathcal C_S$}}
& \multicolumn{1}{c}{\textbf{Validity}}
& \multicolumn{2}{c}{\textbf{Distribution}}
& \multicolumn{3}{c}{\textbf{Condition Control}}\\
\cmidrule(lr){3-3}
\cmidrule(lr){4-5}
\cmidrule(lr){6-8}
&
& Valid. (raw) $\uparrow$
& Div. $\uparrow$
& Dis. $\downarrow$
& Synth. $\downarrow$
& Avg. Gas $\downarrow$
& Avg. MAE$\downarrow$ \\
\midrule
Joint KL
& Adaptive ($\approx2.5{:}1$)
& 0.988 (0.979)
& 0.834 & \textbf{8.764}
& 0.184 & 0.725 & 0.590 \\

Separate 1:1
& $1{:}1$
& \textbf{0.991 (0.986)}
& 0.834 & 9.642
& \textbf{0.183} & \textbf{0.618} & \textbf{0.509} \\

Separate 1:2
& $1{:}2$
& 0.986 (0.965)
& \textbf{0.840} & 9.418 
& 0.189 & 0.648 & 0.534
\\
\bottomrule
\end{tabular}
}
\end{table*}

\section{Related Work}
\label{sec:related_work}
\paragraph{Molecular Graph Generation.}
Molecular inverse design has been approached through evolutionary search, autoregressive generation, and latent-space optimization~\citep{graphga,mars,lstmhc,jtvaebo}. 
Discrete graph diffusion models generate categorical node and edge variables~\citep{digress}.
GraphDiT~\citep{graphdit} incorporates multiple property conditions into graph denoising, while GrIDDD~\citep{griddd} supports node insertion and deletion during diffusion.
DeFoG~\citep{qin2024defog} formulates graph generation through discrete flow matching. 
Reward-guided post-training adapts pretrained graph generators to task-specific objectives. GDPO~\citep{gdpo} and CoMole~\citep{comole} optimize reverse policies, with CoMole operating in a motif-aware graph space.
GraphGRPO~\citep{graphgrpo} applies group-relative policy optimization through analytic graph-flow transitions.

\paragraph{Diffusion Post-Training.}
Recent diffusion alignment methods incorporate terminal feedback at different levels of the generative process.
For discrete sequence models, DDPP~\citep{ddpp} approximates reward-conditioned posteriors across masking levels, while DMPO~\citep{dmpo} formulates diffusion-language reasoning as policy-distribution matching. 
For continuous visual generators, DiffusionNFT~\citep{zheng2026diffusionnft} contrasts positive and negative generations through forward flow matching, and AWM~\citep{awm} derives a policy-gradient-consistent surrogate from
score- or flow-matching losses.
TMPO~\citep{tmpo} matches sampled reverse-trajectory probabilities to a reward-induced Boltzmann distribution. 
\MODEL targets categorical graphs and jointly matches the graph-size marginal and conditional molecular distributions of a reward-tilted target under separate KL constraints.

\section{Conclusion}
\label{sec:conclusion}
We introduced \MODEL, a forward distribution matching framework for reward-based post-training of molecular graph diffusion models.
\MODEL matches a target distribution obtained by reward reweighting of valid terminal graphs, using the same sample weights in the denoising objective.
SDC jointly optimizes the structural marginal over graph sizes and the conditional molecular distributions at each size.
We derive the unique optimal teacher, prove condition-wise policy improvement, and establish an irreducible matching gap under a fixed graph-size prior.
Across molecular and polymer benchmarks, \MODEL achieves the strongest multi-property control with high chemical validity and generalizes to unseen targets from rare property combinations, demonstrating forward distribution matching as an effective alternative to reverse-policy optimization.

\bibliography{iclr2027_conference}
\bibliographystyle{iclr2027_conference}

\newpage
\appendix
\section{Appendix}
\label{sec:appendix}
\subsection{Additional Details on Dataset}
\label{app:dataset_details}

We evaluate \MODEL on a six-property small-molecule benchmark and a
four-property polymer benchmark. Table~\ref{tab:dataset_overview}
summarizes the data used for unconditional pretraining and conditional
post-training. All random subset construction and splitting use seed
42. The same frozen splits and train-derived normalization statistics
are used by all compared methods.

\begin{table*}[hbt]
\centering
\caption{
Overview of the datasets used by \MODEL.
PT uses molecular structures only and does not access target labels or conditional split identities.
}
\label{tab:dataset_overview}
\small
\setlength{\tabcolsep}{6pt}
\begin{tabular}{lcccc}
\toprule
\textbf{Benchmark}
& \textbf{Unlabeled PT}
& \textbf{Conditional Training}
& \textbf{Train/Val/Test}
& \textbf{Motif tokenizer} \\
\midrule
Small molecule
& 100,000
& 10,000
& 6,000/2,000/2,000
& V300-R80 \\
Polymer
& 12,792
& 553
& 337/107/109
& V1000-R80 \\
\bottomrule
\end{tabular}
\end{table*}

\subsubsection{Small-Molecule Benchmark}
\label{app:molecule_dataset}

The small-molecule benchmark is constructed from rQM9 v1~\citep{molguidance}, a corrected
version of QM9 with revised molecular graphs
\citep{qm9}.
Starting from 133,885 source entries, we remove 303 officially
identified problematic structures, 88 structures that fail RDKit
sanitization, 742 structures containing formal charges, and 370
entries belonging to duplicate canonical-SMILES groups. This produces
132,382 unique neutral molecular structures.

We remove explicit hydrogens and stereochemical annotations and
represent each molecule by a connected, canonical, non-isomeric
SMILES. From the cleaned collection, we sample 100,000 structures
without replacement using seed 42 for unconditional pretraining. The
10,000-molecule conditional benchmark is a fixed subset of this
structural pretraining corpus. Importantly, property labels and
conditional split identities are not used during pretraining. 

\paragraph{OOD targets.}
We additionally construct 200 compositional-OOD target conditions from
the 90,000 pretraining molecules outside the conditional benchmark.
For each of $\mu$, $\alpha$, HOMO, LUMO, and $C_v$, the 20th and 80th percentiles of the conditional training split define low, medium, and high regions. 
Their Cartesian combinations define joint property signatures.
We retain signatures represented by at most $0.1\%$ training molecules and select the eight rarest eligible signatures.
The selected signatures occur only two to five times in the 6,000-molecule conditional training split.
We randomly sample 25 targets from each signature using a deterministic seed schedule beginning at 42.
Every selected property value remains within its corresponding
training-set marginal range. SA is
computed after selection and included as the sixth target property.

\subsubsection{Polymer Benchmark}
\label{app:polymer_dataset}

We follow the polymer gas-permeability benchmark, preprocessing, and
evaluation protocol of GraphDiT~\citep{graphdit}. 
The benchmark contains
553 polymer repeat-unit records with jointly observed O$_2$, N$_2$,
and CO$_2$ permeability and an SA score. 
Repeat-unit SMILES
use wildcard atoms to mark polymerization sites.

Permeabilities are stored in Barrer, and model conditioning uses
train-split z-score normalization. Gas-property
errors are evaluated in $\log_{10}$ Barrer space. The polymer backbone
is initialized from CoMole's 12,792-polymer unconditional pretraining
corpus.

\subsubsection{Tokenizer}
\label{app:tokenizer}

We use domain-specific motif tokenizers based on the node-pair encoding
procedure of CoMole~\citep{comole}, where $V$ denotes the motif
vocabulary size and $R$ the ring-token budget. For polymers, we
directly reuse CoMole's V1000-R80 tokenizer learned from its pretraining corpus.

For the rQM9 graphs, we select a smaller V300-R80
tokenizer using the 100k unconditional pretraining
corpus. 
We evaluate the fixed grid
$V\in\{300,500,1000\}$ and $R\in\{30,50,80\}$ using representation-level
compression and reconstruction statistics. Increasing the ring budget
from 30 to 80 consistently shortens the motif sequences. At $R=80$,
increasing $V$ from 300 to 1000 reduces the mean motif count by only
$1.6\%$ and leaves its 95th percentile unchanged, while adding 700
prediction classes. 
We therefore choose V300-R80 as the smallest
tokenizer at this compression plateau. No conditional labels or
downstream validation performance are used in this selection.

\subsubsection{Target Statistics and Normalization}
\label{app:dataset_target_statistics}

For each property $y$, the model receives the standardized condition
\[
\widetilde y
=
\frac{y-\mu_y^{\mathrm{train}}}
     {\sigma_y^{\mathrm{train}}},
\]
where both statistics are computed exclusively from the corresponding
training split. Table~\ref{tab:target_statistics} reports the raw
range over all splits and the train-split normalization statistics.
\Cref{fig:dataset_target_distributions} shows that the fixed
splits cover similar regions of each target distribution.

\begin{table*}[t]
\centering
\caption{
Target ranges and train-split normalization statistics.
Gas-permeability statistics are computed in the raw Barrer scale.
}
\label{tab:target_statistics}
\small
\setlength{\tabcolsep}{5pt}
\begin{tabular}{lllrr}
\toprule
\textbf{Benchmark}
& \textbf{Target}
& \textbf{Unit}
& \textbf{Full range}
& \textbf{Train mean/std} \\
\midrule
Small molecule
& SA
& - & 1.111-7.750 & 4.236 / 0.924 \\
& $\mu$
& D & 0-9.071 & 2.665 / 1.433 \\
& $\alpha$
& $a_0^3$ & 13.21-127.46 & 75.274 / 8.226 \\
& HOMO
& Ha & $-0.3963$-$-0.1538$ & $-0.2404$ / 0.0219 \\
& LUMO
& Ha & $-0.1466$-0.1171 & 0.0128 / 0.0469 \\
& $C_v$
& $\mathrm{cal\,mol^{-1}\,K^{-1}}$
& 6.469-46.306 & 31.663 / 4.118 \\
\midrule
Polymer
& SA
& - & 2.48-7.62 & 4.036 / 0.990 \\
& O$_2$
& Barrer & $2.8{\times}10^{-4}$-$1.87{\times}10^4$
& 387.180 / 1,678.333 \\
& N$_2$
& Barrer & $1.6{\times}10^{-4}$-$1.66{\times}10^4$
& 253.367 / 1,281.755 \\
& CO$_2$
& Barrer & $1.8{\times}10^{-3}$-$4.70{\times}10^4$
& 1,281.308 / 4,771.168 \\
\bottomrule
\end{tabular}
\end{table*}

\begin{figure*}[t]
\centering
\includegraphics[width=\textwidth]{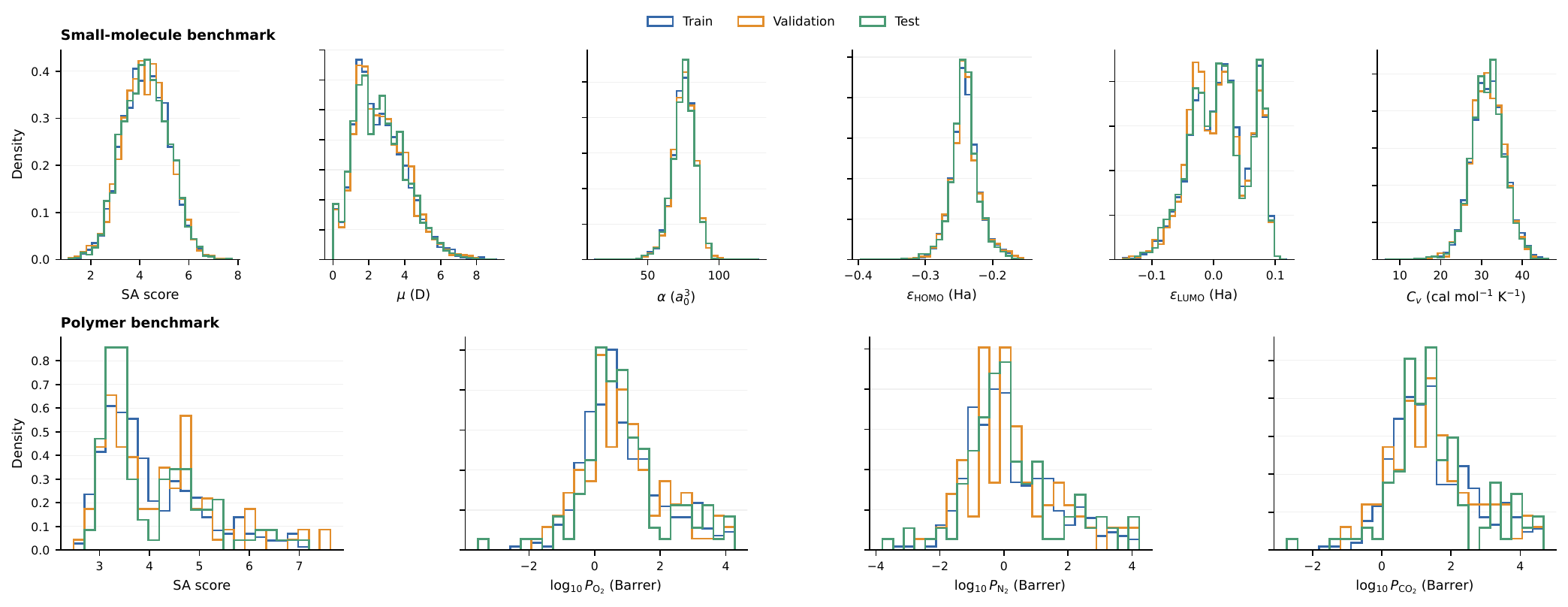}
\caption{
Target distributions for the small-molecule and polymer benchmarks,
shown separately for the training, validation, and test splits.
Polymer gas permeabilities are visualized in $\log_{10}$ Barrer,
consistent with the evaluation metric.
}
\label{fig:dataset_target_distributions}
\end{figure*}

\subsubsection{Structural Statistics}
\label{app:dataset_structural_statistics}

Table~\ref{tab:dataset_structural_statistics} summarizes the molecular
scale and motif-graph scale of the two conditional benchmarks. Real
atom counts exclude polymer wildcard atoms. Active motif units are the
structural variable $N$ used by the motif-aware diffusion backbone and
controlled by SDC.

As shown in
\cref{fig:dataset_structural_distributions}, motif tokenization induces
a substantially different active-unit distribution from the real-atom
count. This distinction is directly relevant to \MODEL because the
active-unit count is sampled before denoising and forms the structural
marginal controlled by SDC.

\begin{table*}[t]
\centering
\caption{
Structural statistics of the conditional benchmarks.
Values are reported as mean/maximum.
}
\label{tab:dataset_structural_statistics}
\small
\setlength{\tabcolsep}{6pt}
\begin{tabular}{lccccc}
\toprule
\textbf{Benchmark}
& \textbf{Real atoms}
& \textbf{Hetero atoms}
& \textbf{Rings}
& \textbf{Mol.\ weight}
& \textbf{Active motif units} \\
\midrule
Small molecule
& 8.80/9
& 2.44/7
& 1.74/8
& 122.72/152.04
& 4.84/9 \\
Polymer
& 27.97/48
& 5.28/17
& 3.68/11
& 391.78/866.20
& 6.66/46 \\
\bottomrule
\end{tabular}
\end{table*}

\begin{figure}[t]
\centering
\includegraphics[width=0.95\linewidth]{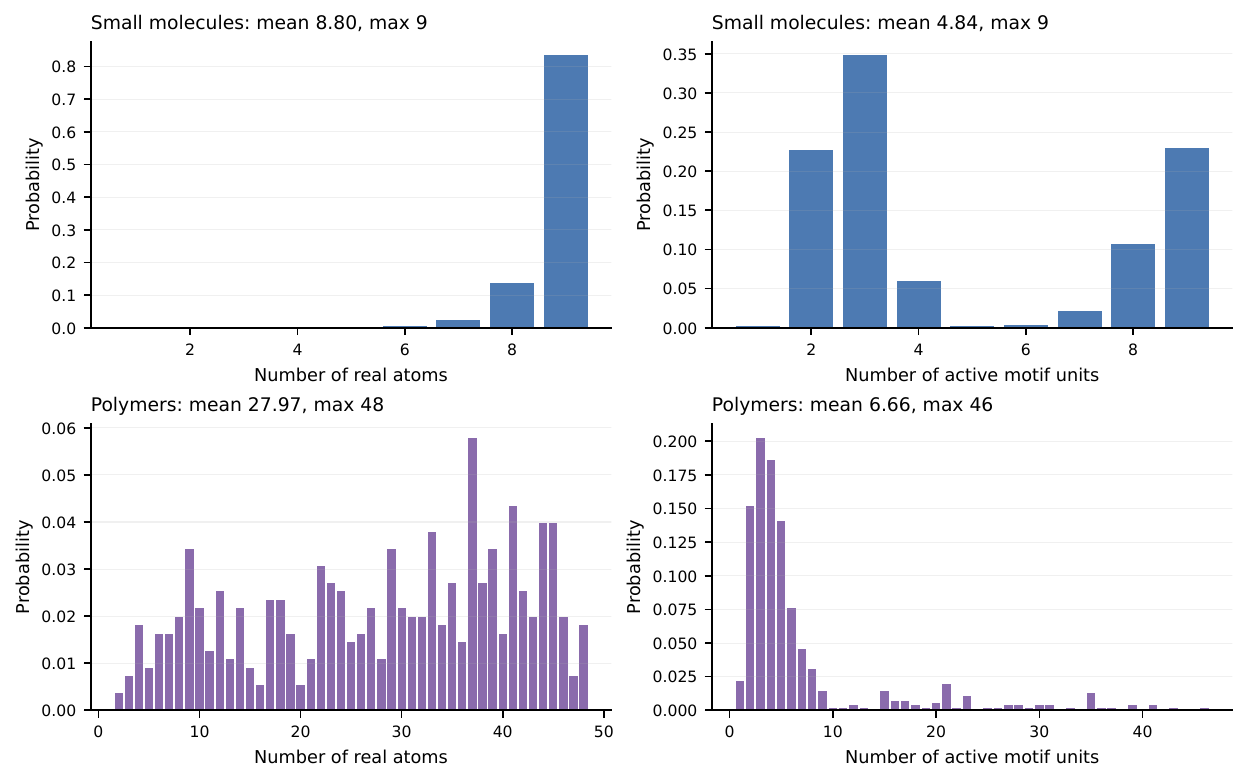}
\caption{
Distributions of real-atom counts and active motif-unit counts in the
two conditional benchmarks. Polymer wildcard atoms are excluded from
the real-atom count. The active motif-unit count corresponds to the
structural variable $N$ used by \MODEL.
}
\label{fig:dataset_structural_distributions}
\end{figure}

\subsection{Additional Theory}
\label{app:additional_theory}
Throughout this appendix, $\mathbb E_{\mathbf c}$ denotes the empirical
average over target conditions. In condition-wise derivations, we
suppress conditioning on $\mathbf c$ when no ambiguity arises.

\subsubsection{Validity Conditioning and Structural Support}
\label{app:valid_structural_support}

Define the count-conditional validity probability

$$
v_k(n,\mathbf c)
:=
\Pr_{S\sim\pi_{\theta_k}(\cdot\mid n,\mathbf c)}
\left(
(n,S)\in\mathcal G_{\mathrm{val}}
\right).
$$

From
\cref{eq:current_graph_distribution,eq:valid_reference},
\begin{equation}
\label{eq:valid_reference_marginal}
\bar p_k(n\mid\mathbf c)
=
\frac{
p_k(n\mid\mathbf c)\,v_k(n,\mathbf c)
}{
\sum_{n'}
p_k(n'\mid\mathbf c)\,v_k(n',\mathbf c)
}.
\end{equation}
Thus, validity conditioning reweights \(p_k\) according to the
count-dependent validity probability \(v_k\).
The two marginals coincide if and only if $v_k(n,\mathbf c)$ is
constant over the support of $p_k(\cdot\mid\mathbf c)$.
Moreover,
$$
\bar p_k(\cdot\mid\mathbf c)
\ll
p_k(\cdot\mid\mathbf c)
\ll
p_0.
$$

\subsubsection{Proof of Forward Distribution Matching}
\label{app:proof_reward_to_denoising}

\begin{proof}
Fix a target condition \(\mathbf c\) and define

$$
H_\theta(G,\mathbf c)
:=
\mathbb E_{\substack{
t\sim\operatorname{Unif}(\{1,\ldots,T\}),\\
S_t\sim q_t(\cdot\mid S_0,m)
}}
\left[
\ell_{\mathrm{den}}^\theta
(S_0,S_t,m,t,\mathbf c)
\right].
$$

Using

$$
w_k^\mu(G,\mathbf c)
=
\frac{
\mu_k(G\mid\mathbf c)
}{
\bar\pi_k(G\mid\mathbf c)
},
$$

the change-of-measure identity gives

$$
\begin{aligned}
&
\mathbb E_{G\sim\bar\pi_k(\cdot\mid\mathbf c)}
\left[
w_k^\mu(G,\mathbf c)
H_\theta(G,\mathbf c)
\right]
\\
&=
\mathbb E_{G\sim\mu_k(\cdot\mid\mathbf c)}
\left[
H_\theta(G,\mathbf c)
\right],
\end{aligned}
$$

which proves \cref{eq:forward_denoising_projection}.

Applying the same identity to the structural log loss gives

$$
\begin{aligned}
&
\mathbb E_{G\sim\bar\pi_k(\cdot\mid\mathbf c)}
\left[
w_k^\mu(G,\mathbf c)
\bigl(-\log p_\phi(N\mid\mathbf c)\bigr)
\right]
\\
&=
\mathbb E_{N\sim\mu_{N,k}(\cdot\mid\mathbf c)}
\left[
-\log p_\phi(N\mid\mathbf c)
\right],
\end{aligned}
$$

which proves \cref{eq:structural_projection_loss}. Averaging over
target conditions preserves both identities.
\end{proof}

\subsubsection{Proper-Scoring Interpretation}
\label{app:proper_scoring}

Consider the teacher-corruption sampling scheme

$$
\mathbf c
\sim
\text{the empirical target-condition distribution},
\qquad
G=(N,S_0)
\sim
\mu_k(\cdot\mid\mathbf c),
$$

$$
t\sim\operatorname{Unif}(\{1,\ldots,T\}),
\qquad
S_t\sim q_t(\cdot\mid S_0,m).
$$

Let \(U=(S_t,m,t,\mathbf c)\). For an active entry \(a\) of
\(Y\in\{X,E,P\}\), let
\(d_{\theta,Y,a}(\cdot\mid U)\) denote the corresponding categorical
prediction and define the teacher-induced posterior marginal

$$
r_{Y,a}^{\mu}(\cdot\mid U)
:=
\Pr(Y_{0,a}=\cdot\mid U)
$$

under the sampling scheme above. Its expected cross-entropy decomposes
as

$$
\begin{aligned}
&
\mathbb E
\left[
-\log d_{\theta,Y,a}(Y_{0,a}\mid U)
\right]
\\
&=
\mathbb E_U
\left[
H\left(r_{Y,a}^{\mu}(\cdot\mid U)\right)
+
D_{\mathrm{KL}}
\left(
r_{Y,a}^{\mu}(\cdot\mid U)
\Vert
d_{\theta,Y,a}(\cdot\mid U)
\right)
\right].
\end{aligned}
$$

Summing over the active entries of \(X\), \(E\), and \(P\) shows that,
under simultaneous realizability across target conditions, every
population minimizer of
\(\mathcal L_{S,k}^{\mu}(\theta)\) recovers the corresponding
teacher-induced posterior marginals almost surely.

Likewise, the structural objective satisfies

$$
\begin{aligned}
\mathcal L_{N,k}^{\mu}(\phi)
=
\mathbb E_{\mathbf c}
\Big[
&H\left(
\mu_{N,k}(\cdot\mid\mathbf c)
\right)
\\
&+
D_{\mathrm{KL}}
\left(
\mu_{N,k}(\cdot\mid\mathbf c)
\Vert
p_\phi(\cdot\mid\mathbf c)
\right)
\Big].
\end{aligned}
$$

Hence, under simultaneous realizability, every population minimizer of
\(\mathcal L_{N,k}^{\mu}(\phi)\) recovers
\(\mu_{N,k}(\cdot\mid\mathbf c)\) for every target condition with
positive empirical mass.

\subsubsection{Proof of Proposition~\ref{prop:structural_marginal_gap}}
\label{app:structural_gap}

\begin{proof}
Fix $\mathbf c$ and suppress it from the notation. For any conditional
distribution $\widetilde\pi(S\mid N)$ for which the divergence is
finite, the hierarchical factorizations give
\[
\log
\frac{
\mu_k(N,S)
}{
p_k(N)\widetilde\pi(S\mid N)
}
=
\log
\frac{
\mu_{N,k}(N)
}{
p_k(N)
}
+
\log
\frac{
\mu_{S,k}(S\mid N)
}{
\widetilde\pi(S\mid N)
}.
\]
Taking expectation under $\mu_k(N,S)$ yields
\cref{eq:structural_kl_decomposition}. The conditional KL term is
nonnegative and becomes zero when
\[
\widetilde\pi(\cdot\mid N)
=
\mu_{S,k}(\cdot\mid N)
\]
for $\mu_{N,k}$-almost every $N$. Therefore, taking the infimum
over conditional molecular distributions gives
\cref{eq:irreducible_structural_gap}.
\end{proof}

\subsubsection{Proof of \cref{thm:structured_teacher}}
\label{app:structural_teacher}

\begin{proof}
The Lagrangian of \cref{eq:sdc_trust_region_problem} is
\begin{equation}
\label{eq:sdc_lagrangian}
\begin{aligned}
\mathfrak L(\rho,\tau_N,\tau_S)
&=
\mathbb E_{\mathbf c}
\mathbb E_{G\sim\rho(\cdot\mid\mathbf c)}
\left[R_{\mathbf c}(G)\right]
-
\tau_N\!\left(\mathcal C_N(\rho)-\epsilon_N\right)
-
\tau_S\!\left(\mathcal C_S(\rho)-\epsilon_S\right).
\end{aligned}
\end{equation}
Fix \(\tau_N,\tau_S>0\). The terms
\(\tau_N\epsilon_N+\tau_S\epsilon_S\) do not depend on \(\rho\),
and the remaining maximization separates across target conditions.
All condition-wise statements below concern conditions in the support
of the law defining \(\mathbb E_{\mathbf c}\). For a fixed such
\(\mathbf c\), write
\begin{align}
\mathcal J_{\mathbf c}(\rho_N,\rho_S)
&:=
\mathbb E_{N\sim\rho_N(\cdot\mid\mathbf c)}
\Big[
\mathbb E_{S\sim\rho_S(\cdot\mid N,\mathbf c)}
[R_{\mathbf c}(N,S)]
\notag\\
&\hspace{35mm}
-
\tau_S
D_{\mathrm{KL}}\!\left(
\rho_S(\cdot\mid N,\mathbf c)
\Vert
\bar\pi_k(\cdot\mid N,\mathbf c)
\right)
\Big]
\notag\\
&\quad-
\tau_N
D_{\mathrm{KL}}\!\left(
\rho_N(\cdot\mid\mathbf c)
\Vert
\bar p_k(\cdot\mid\mathbf c)
\right).
\label{eq:conditionwise_sdc_lagrangian}
\end{align}

For
\(n\in\operatorname{supp}\bar p_k(\cdot\mid\mathbf c)\), define
\[
\widetilde\pi_{S,k}(S\mid n,\mathbf c)
:=
\frac{
\bar\pi_k(S\mid n,\mathbf c)
\exp\!\left(R_{\mathbf c}(n,S)/\tau_S\right)
}{
Z_{S,k}(n,\mathbf c;\tau_S)
}.
\]
The Gibbs variational identity gives, for every
\(\rho_S(\cdot\mid n,\mathbf c)\ll
\bar\pi_k(\cdot\mid n,\mathbf c)\),
\begin{align}
&
\mathbb E_{\rho_S(\cdot\mid n,\mathbf c)}
[R_{\mathbf c}(n,S)]
-
\tau_S
D_{\mathrm{KL}}\!\left(
\rho_S(\cdot\mid n,\mathbf c)
\Vert
\bar\pi_k(\cdot\mid n,\mathbf c)
\right)
\notag\\
&\qquad=
\tau_S\log Z_{S,k}(n,\mathbf c;\tau_S)
-
\tau_S
D_{\mathrm{KL}}\!\left(
\rho_S(\cdot\mid n,\mathbf c)
\Vert
\widetilde\pi_{S,k}(\cdot\mid n,\mathbf c)
\right).
\label{eq:conditional_sdc_gibbs_identity}
\end{align}
Thus the conditional maximizer is unique and equals
\(\widetilde\pi_{S,k}(\cdot\mid n,\mathbf c)\).

Now define
\[
\widetilde\pi_{N,k}(n\mid\mathbf c)
:=
\frac{
\bar p_k(n\mid\mathbf c)
Z_{S,k}(n,\mathbf c;\tau_S)^{\tau_S/\tau_N}
}{
Z_{N,k}(\mathbf c;\tau_N,\tau_S)
}.
\]
A second application of the Gibbs variational identity yields
\begin{align}
&
\mathbb E_{N\sim\rho_N(\cdot\mid\mathbf c)}
\!\left[
\tau_S\log Z_{S,k}(N,\mathbf c;\tau_S)
\right]
-
\tau_N
D_{\mathrm{KL}}\!\left(
\rho_N(\cdot\mid\mathbf c)
\Vert
\bar p_k(\cdot\mid\mathbf c)
\right)
\notag\\
&\qquad=
\tau_N\log Z_{N,k}(\mathbf c;\tau_N,\tau_S)
-
\tau_N
D_{\mathrm{KL}}\!\left(
\rho_N(\cdot\mid\mathbf c)
\Vert
\widetilde\pi_{N,k}(\cdot\mid\mathbf c)
\right).
\label{eq:marginal_sdc_gibbs_identity}
\end{align}
Combining
\cref{eq:conditional_sdc_gibbs_identity,eq:marginal_sdc_gibbs_identity}
gives
\begin{align}
\mathcal J_{\mathbf c}(\rho_N,\rho_S)
&=
\tau_N\log Z_{N,k}(\mathbf c;\tau_N,\tau_S)
-
\tau_N
D_{\mathrm{KL}}\!\left(
\rho_N(\cdot\mid\mathbf c)
\Vert
\widetilde\pi_{N,k}(\cdot\mid\mathbf c)
\right)
\notag\\
&\quad-
\tau_S
\mathbb E_{N\sim\rho_N(\cdot\mid\mathbf c)}
D_{\mathrm{KL}}\!\left(
\rho_S(\cdot\mid N,\mathbf c)
\Vert
\widetilde\pi_{S,k}(\cdot\mid N,\mathbf c)
\right).
\label{eq:conditionwise_sdc_gap_identity}
\end{align}
Both divergence terms are nonnegative. Moreover,
\(\widetilde\pi_{N,k}\) is positive on
\(\operatorname{supp}\bar p_k(\cdot\mid\mathbf c)\).
Consequently, the unique maximizing joint distribution is
\[
\widetilde\pi_k(N,S\mid\mathbf c)
=
\widetilde\pi_{N,k}(N\mid\mathbf c)
\widetilde\pi_{S,k}(S\mid N,\mathbf c).
\]

It remains to connect this Lagrangian maximizer to the constrained
problem. Viewed as an optimization over the joint probabilities on the
finite reference support, the reward objective is linear and both KL
constraints are convex; convexity of \(\mathcal C_S\) follows from the
perspective form of conditional relative entropy. Furthermore,
\(\rho=\bar\pi_k\) satisfies
\(\mathcal C_N(\bar\pi_k)=\mathcal C_S(\bar\pi_k)=0\), and is therefore
strictly feasible because \(\epsilon_N,\epsilon_S>0\). Slater's
condition and the KKT conditions therefore apply. At the assumed
positive optimal dual variables, the primal optimizer must equal the
unique Lagrangian maximizer above. Identifying
\(\widetilde\pi_k=\pi_k^\star\) proves
\cref{eq:structural_conditional_teacher,eq:structural_marginal_teacher}
and uniqueness.
\end{proof}

\paragraph{Projection ratio and shared-temperature case.}
On the reference support, multiplying the two factorwise ratios gives
\[
\frac{\pi_k^\star(G\mid\mathbf c)}
{\bar\pi_k(G\mid\mathbf c)}
=
\frac{
\exp\!\left(R_{\mathbf c}(G)/\tau_S\right)
Z_{S,k}(N,\mathbf c;\tau_S)^{\tau_S/\tau_N-1}
}{
Z_{N,k}(\mathbf c;\tau_N,\tau_S)
},
\]
which is \cref{eq:structured_reward_weight}. If
\(\tau_N=\tau_S=\tau\), then
\[
Z_{N,k}(\mathbf c;\tau,\tau)
=
\sum_n \bar p_k(n\mid\mathbf c)
Z_{S,k}(n,\mathbf c;\tau)
=
\mathbb E_{G\sim\bar\pi_k(\cdot\mid\mathbf c)}
\!\left[\exp\!\left(R_{\mathbf c}(G)/\tau\right)\right].
\]
The factor
\(Z_{S,k}(N,\mathbf c;\tau)\) cancels between the structural and
conditional teacher factors, yielding
\cref{eq:shared_temperature_target}.

\subsubsection{Proof of
Corollary~\ref{cor:structured_teacher_improvement}}
\label{app:structured_teacher_improvement}

\begin{proof}
Fix \(\mathbf c\) and abbreviate
\[
P_N:=\pi_k^\star(N\mid\mathbf c),
\qquad
Q_N:=\bar p_k(N\mid\mathbf c),
\]
and, for each \(n\) in their common support,
\[
P_S^n:=\pi_k^\star(S\mid n,\mathbf c),
\qquad
Q_S^n:=\bar\pi_k(S\mid n,\mathbf c).
\]
Because the reward is real-valued on the finite reference support, the
teacher and reference factors have the same supports. Combining
\cref{eq:structural_conditional_teacher,eq:structural_marginal_teacher}
therefore gives, on that support,
\begin{equation}
\label{eq:structured_reward_log_ratio_identity}
R_{\mathbf c}(n,s)
=
\tau_N\log\frac{P_N(n)}{Q_N(n)}
+
\tau_S\log\frac{P_S^n(s)}{Q_S^n(s)}
+
\tau_N\log Z_{N,k}(\mathbf c;\tau_N,\tau_S).
\end{equation}
Taking the expectation of
\cref{eq:structured_reward_log_ratio_identity} first under
\(P_NP_S^N\), then under \(Q_NQ_S^N\), and subtracting cancels the
last term and yields the exact identity
\begin{equation}
\label{eq:conditionwise_exact_improvement}
\begin{aligned}
\Delta_k(\mathbf c)
&=
\tau_N
\left[
D_{\mathrm{KL}}(P_N\Vert Q_N)
+
D_{\mathrm{KL}}(Q_N\Vert P_N)
\right]
\\
&\quad+
\tau_S
\mathbb E_{N\sim P_N}
D_{\mathrm{KL}}(P_S^N\Vert Q_S^N)
\\
&\quad+
\tau_S
\mathbb E_{N\sim Q_N}
D_{\mathrm{KL}}(Q_S^N\Vert P_S^N).
\end{aligned}
\end{equation}
Dropping the two reverse-KL terms proves
\cref{eq:conditionwise_reward_improvement}. If
\(\pi_k^\star(\cdot\mid\mathbf c)\neq
\bar\pi_k(\cdot\mid\mathbf c)\), then the KL chain rule implies
\[
D_{\mathrm{KL}}(P_N\Vert Q_N)
+
\mathbb E_{N\sim P_N}
D_{\mathrm{KL}}(P_S^N\Vert Q_S^N)
>0.
\]
Since \(\tau_N,\tau_S>0\), the lower bound in
\cref{eq:conditionwise_reward_improvement} is then strictly positive.

Averaging \cref{eq:conditionwise_exact_improvement} over target
conditions and retaining the two forward-KL terms gives
\[
\mathbb E_{\mathbf c}[\Delta_k(\mathbf c)]
\geq
\tau_N\mathcal C_N(\pi_k^\star)
+
\tau_S\mathcal C_S(\pi_k^\star).
\]
Both constraints are active by assumption, so
\(\mathcal C_N(\pi_k^\star)=\epsilon_N\) and
\(\mathcal C_S(\pi_k^\star)=\epsilon_S\), proving
\cref{eq:structured_reward_improvement}.

Finally,
\(\pi_k^\star\ll\bar\pi_k\) and
\(\bar\pi_k(\mathcal G_{\mathrm{val}}\mid\mathbf c)=1\) imply
\(\pi_k^\star(\mathcal G_{\mathrm{val}}\mid\mathbf c)=1\).
\end{proof}

\subsubsection{Finite-Sample Structured Projection}
\label{app:structured_projection}

We describe the estimator for one target condition \(\mathbf c\);
the training objectives and KL constraints are averaged over the
sampled target conditions. To include active-node-count exploration,
let
\[
b_k(G\mid\mathbf c)
=
b_{N,k}(N\mid\mathbf c)\,
\pi_{\theta_k}(S\mid N,\mathbf c)
\]
be a behavior distribution satisfying
\(p_k(\cdot\mid\mathbf c)\ll
b_{N,k}(\cdot\mid\mathbf c)\). The online update in the main text is
the special case \(b_{N,k}=p_k\). Draw \(B\) independent candidates
\[
N_i\sim b_{N,k}(\cdot\mid\mathbf c),
\qquad
S_{0,i}\sim\pi_{\theta_k}(\cdot\mid N_i,\mathbf c),
\qquad
G_i=(N_i,S_{0,i}).
\]
Define
\[
a_i(\mathbf c)
:=
\mathbf 1\{G_i\in\mathcal G_{\mathrm{val}}\}
\frac{p_k(N_i\mid\mathbf c)}
{b_{N,k}(N_i\mid\mathbf c)}.
\]
Whenever \(A(\mathbf c):=\sum_{j=1}^B a_j(\mathbf c)>0\), set
\[
\widehat r_{k,i}(\mathbf c)
:=
\frac{a_i(\mathbf c)}{A(\mathbf c)}.
\]
These self-normalized proposal weights form the empirical
valid-conditioned reference distribution. In particular, if
\(b_{N,k}=p_k\), they are uniform over the valid candidates.

Let
\[
\mathcal I_n(\mathbf c)
:=
\{i:a_i(\mathbf c)>0,\ N_i=n\},
\qquad
\widehat{\bar p}_k(n\mid\mathbf c)
:=
\sum_{i\in\mathcal I_n(\mathbf c)}
\widehat r_{k,i}(\mathbf c).
\]
For every empirically represented \(n\), define the empirical
reference conditional mass and conditional partition function by
\begin{align}
\widehat{\bar\pi}_k(i\mid n,\mathbf c)
&:=
\frac{\widehat r_{k,i}(\mathbf c)}
{\widehat{\bar p}_k(n\mid\mathbf c)},
\qquad i\in\mathcal I_n(\mathbf c),
\label{eq:empirical_reference_conditional}
\\
\widehat Z_{S,k}(n,\mathbf c;\tau_S)
&:=
\sum_{i\in\mathcal I_n(\mathbf c)}
\widehat{\bar\pi}_k(i\mid n,\mathbf c)
\exp\!\left(\frac{R_{\mathbf c}(G_i)}{\tau_S}\right).
\label{eq:empirical_conditional_partition}
\end{align}
Here sampled indices are treated as empirical atoms; repeated graphs
may equivalently be aggregated. The empirical structural partition
function is
\begin{equation}
\label{eq:empirical_structural_partition}
\widehat Z_{N,k}(\mathbf c;\tau_N,\tau_S)
:=
\sum_{n:\widehat{\bar p}_k(n\mid\mathbf c)>0}
\widehat{\bar p}_k(n\mid\mathbf c)
\widehat Z_{S,k}(n,\mathbf c;\tau_S)^{\tau_S/\tau_N}.
\end{equation}

The plug-in estimate of the density ratio in
\cref{eq:structured_reward_weight} is
\begin{equation}
\label{eq:empirical_structured_density_ratio}
\widehat w_k(G_i,\mathbf c)
:=
\frac{
\exp\!\left(R_{\mathbf c}(G_i)/\tau_S\right)
\widehat Z_{S,k}(N_i,\mathbf c;\tau_S)^{
\tau_S/\tau_N-1
}
}{
\widehat Z_{N,k}(\mathbf c;\tau_N,\tau_S)
},
\qquad a_i(\mathbf c)>0.
\end{equation}
The corresponding empirical teacher mass assigned to candidate \(i\)
is
\begin{equation}
\label{eq:empirical_structured_weight}
\widehat\omega_{k,i}(\mathbf c)
:=
\widehat r_{k,i}(\mathbf c)\,
\widehat w_k(G_i,\mathbf c).
\end{equation}
Indeed,
\begin{align*}
&\sum_{i:a_i(\mathbf c)>0}
\widehat r_{k,i}(\mathbf c)
\exp\!\left(R_{\mathbf c}(G_i)/\tau_S\right)
\widehat Z_{S,k}(N_i,\mathbf c;\tau_S)^{
\tau_S/\tau_N-1
}
\\
&\qquad=
\sum_n
\widehat{\bar p}_k(n\mid\mathbf c)
\widehat Z_{S,k}(n,\mathbf c;\tau_S)^{\tau_S/\tau_N}
=
\widehat Z_{N,k}(\mathbf c;\tau_N,\tau_S),
\end{align*}
so \(\sum_i\widehat\omega_{k,i}(\mathbf c)=1\).
The induced empirical teacher factors are
\begin{align}
\widehat\pi_{N,k}^\star(n\mid\mathbf c)
&=
\frac{
\widehat{\bar p}_k(n\mid\mathbf c)
\widehat Z_{S,k}(n,\mathbf c;\tau_S)^{\tau_S/\tau_N}
}{
\widehat Z_{N,k}(\mathbf c;\tau_N,\tau_S)
},
\label{eq:empirical_structural_teacher}
\\
\widehat\pi_k^\star(i\mid n,\mathbf c)
&=
\frac{
\widehat{\bar\pi}_k(i\mid n,\mathbf c)
\exp\!\left(R_{\mathbf c}(G_i)/\tau_S\right)
}{
\widehat Z_{S,k}(n,\mathbf c;\tau_S)
},
\qquad i\in\mathcal I_n(\mathbf c).
\label{eq:empirical_conditional_teacher}
\end{align}
Equivalently,
\(\widehat\omega_{k,i}(\mathbf c)
=
\widehat\pi_{N,k}^\star(N_i\mid\mathbf c)
\widehat\pi_k^\star(i\mid N_i,\mathbf c)\).

\paragraph{Empirical projection objectives.}
For each valid candidate, sample
\[
t_i\sim\operatorname{Unif}\{1,\ldots,T\},
\qquad
S_{t_i}\sim q_{t_i}(\cdot\mid S_{0,i},m_i).
\]
Using the same empirical teacher mass for both projections gives
\begin{align}
\widehat{\mathcal L}_{S,k}(\theta;\mathbf c)
&=
\sum_{i:a_i(\mathbf c)>0}
\widehat\omega_{k,i}(\mathbf c)\,
\ell_{\mathrm{den}}^\theta
(S_{0,i},S_{t_i},m_i,t_i,\mathbf c),
\label{eq:empirical_sdc_denoising_loss}
\\
\widehat{\mathcal L}_{N,k}(\phi;\mathbf c)
&=
-
\sum_{i:a_i(\mathbf c)>0}
\widehat\omega_{k,i}(\mathbf c)
\log p_\phi(N_i\mid\mathbf c).
\label{eq:empirical_sdc_structural_loss}
\end{align}
The minibatch objectives are obtained by averaging these expressions
over target conditions.

\paragraph{Dual estimation.}
Suppose a minibatch contains target conditions
\(\mathbf c_1,\ldots,\mathbf c_M\), each with a nonempty valid
candidate bank. For a fixed positive pair \((\tau_N,\tau_S)\), define
\begin{align}
\widehat{\mathcal C}_N(\tau_N,\tau_S)
&:=
\frac{1}{M}\sum_{m=1}^M
D_{\mathrm{KL}}\!\left(
\widehat\pi_{N,k}^\star(\cdot\mid\mathbf c_m)
\Vert
\widehat{\bar p}_k(\cdot\mid\mathbf c_m)
\right),
\label{eq:empirical_structural_kl}
\\
\widehat{\mathcal C}_S(\tau_N,\tau_S)
&:=
\frac{1}{M}\sum_{m=1}^M
\mathbb E_{N\sim
\widehat\pi_{N,k}^\star(\cdot\mid\mathbf c_m)}
D_{\mathrm{KL}}\!\left(
\widehat\pi_k^\star(\cdot\mid N,\mathbf c_m)
\Vert
\widehat{\bar\pi}_k(\cdot\mid N,\mathbf c_m)
\right).
\label{eq:empirical_conditional_kl}
\end{align}
Applying the two Gibbs variational identities to the empirical
reference distribution gives the empirical dual objective
\begin{equation}
\label{eq:empirical_sdc_dual}
\widehat d_k(\tau_N,\tau_S)
:=
\tau_N\epsilon_N+\tau_S\epsilon_S
+
\frac{1}{M}\sum_{m=1}^M
\tau_N
\log\widehat Z_{N,k}
(\mathbf c_m;\tau_N,\tau_S).
\end{equation}
Thus the shared temperatures are estimated by minimizing
\(\widehat d_k\) over \(\tau_N,\tau_S>0\). By the envelope theorem,
\begin{equation}
\label{eq:empirical_sdc_dual_gradient}
\frac{\partial\widehat d_k}{\partial\tau_N}
=
\epsilon_N-\widehat{\mathcal C}_N,
\qquad
\frac{\partial\widehat d_k}{\partial\tau_S}
=
\epsilon_S-\widehat{\mathcal C}_S.
\end{equation}
Consequently, an interior dual optimum enforces the two empirical KL
budgets. The same pair of temperatures is shared across conditions
because the constraints in \cref{eq:sdc_trust_region_problem} are
condition-averaged.

If \(A(\mathbf c)=0\), that condition contributes no update in the
current iteration. Counts absent from the valid candidate bank receive
zero empirical teacher mass. Under the finite-support assumptions of
\cref{thm:structured_teacher}, the stated proposal-support condition,
and integrability of the projection losses, the self-normalized
partition and objective estimates converge almost surely to their
population counterparts for fixed positive temperatures as the
candidate-bank size tends to infinity.

\subsection{Additional Details on Experiments}
\label{app:experimental_details}

\subsubsection{Experimental Setup}
\label{app:experimental_setup}

Each training and evaluation job is run on a single NVIDIA H100 GPU.
We use the graph diffusion transformer backbone with 12 transformer layers,
hidden dimension 1024, 16 attention heads, and an MLP expansion ratio
of 4. The diffusion process uses 500 steps, a cosine noise schedule,
and marginal transition kernels.
PT and SFT use learning rate $2\times10^{-5}$. Each SFT
model is initialized from its corresponding PT checkpoint, and
\MODEL is initialized from the resulting SFT checkpoint.

The forward distribution matching post-training runs for 10 online rounds. Each round collects $K=32$
candidates per training condition and uses the symmetric trust-region
budgets $\epsilon_N=\epsilon_S=0.035$. The denoiser and structural
controller are each updated for 20 epochs per round using learning
rates $2\times10^{-6}$ and $10^{-4}$, respectively. The controller is
a two-hidden-layer MLP of width 64. Dataset-specific epoch and batch
configurations are reported in \cref{tab:training_configuration}.
Finite-sample teacher construction and proposal correction are
described in \cref{app:structured_projection}.

\begin{table}[hbt]
\centering
\caption{
Dataset-specific training configurations.
}
\label{tab:training_configuration}
\small
\begin{tabular}{lcc}
\toprule
\textbf{Configuration} & \textbf{Molecule} & \textbf{Polymer} \\
\midrule
PT epochs                 & 200   & 400  \\
PT batch size             & 256   & 64   \\
SFT epochs                & 200   & 1200 \\
SFT batch size            & 256   & 128  \\
Online rounds             & 10    & 10   \\
Candidates per condition  & 32    & 32   \\
Graph-update batch size   & 256   & 64   \\
Controller batch size     & 1024  & 64   \\
Test samples              & 10,000 & 10,000 \\
\bottomrule
\end{tabular}
\end{table}

\paragraph{Reward definition.}
Let $\widehat y_j(G)$ and $c_j$ denote the evaluated and target values
of property $j$, respectively. Define the property-specific evaluation
transform
\begin{equation}
h_j(y)
:=
\begin{cases}
\log_{10} y,
&
\text{$j$ is a polymer gas-permeability property},
\\
y,
&
\text{otherwise}.
\end{cases}
\end{equation}
For both benchmarks, GraphFDM uses the normalized absolute error
\begin{equation}
e_j(G,\mathbf c)
:=
\frac{
\left|
h_j\!\left(\widehat y_j(G)\right)-h_j(c_j)
\right|
}{
\sigma_j^{\mathrm{tr}}
},
\end{equation}
where $\sigma_j^{\mathrm{tr}}$ is the standard deviation of
$h_j(y_j)$ on the corresponding conditional training split. The
terminal reward is
\begin{equation}
R_{\mathbf c}(G)
=
-\frac{1}{d}
\sum_{j=1}^{d}
e_j(G,\mathbf c),
\end{equation}
where $d=6$ for small molecules and $d=4$ for polymers. Thus, all
target properties contribute equally after property-wise
normalization.

\paragraph{Baselines.}
We use author-released implementations and recommended configurations
whenever available. Otherwise, we reproduce the reported architecture
and training protocol and tune unspecified hyperparameters on the
validation set using Optuna~\citep{akiba2019optuna}. Implementations are
adapted to support our data and evaluation interfaces. All methods
use the same splits, target conditions, property normalization,
validity criteria, and property evaluators.

\subsubsection{Robustness Across Seeds}
\label{app:multi_seed}
To evaluate robustness to training stochasticity, we repeat the five leading methods from the main comparison (\cref{tab:main_results_mol}): \MODEL, GraphGRPO,
GDPO, CoMole, and DiGress, in order of their single-seed Avg.\ Rank.
We use three training seeds, $0,1,2$, where seed $0$ corresponds to the main-table run. The data split, property oracle,
and test targets are fixed across all runs. For post-training methods, we keep the same reference checkpoint and rerun only the
online post-training stage. 

\begin{table*}[t]
\centering
\small
\setlength{\tabcolsep}{6pt}
\renewcommand{\arraystretch}{1.15}

\caption{
\textbf{Multi-seed results of the top-5 models on QM9 multi-conditional generation.}
Results on six properties (synthetic score and five QM9 properties) are reported as mean$\pm$std.
Overall Rank aggregates property-wise MAE ranks across different units and scales.
Best results are in \BEST{red}.
}
\label{tab:multi_seed}

\textbf{(a) Validity and Distribution Learning}

\vspace{3pt}

\begin{tabular}{lcccc}
\toprule
\textbf{Model}
& \multicolumn{1}{c}{\textbf{Validity}}
& \multicolumn{3}{c}{\textbf{Distribution Learning}} \\
\cmidrule(lr){2-2}
\cmidrule(lr){3-5}
& Valid. (raw) $\uparrow$
& Div. $\uparrow$
& Sim. $\uparrow$
& Dis. $\downarrow$ \\
\midrule

DiGress
& 0.914$\pm$0.004 (0.827$\pm$0.020)
& \BEST{0.919$\pm$0.002}
& 0.943$\pm$0.002
& 4.742$\pm$0.142\\

CoMole
& 0.937$\pm$0.037 (0.831$\pm$0.067)
& 0.916$\pm$0.001
& 0.930$\pm$0.010
& 4.253$\pm$0.019 \\

GDPO
& 0.914$\pm$0.003 (0.849$\pm$0.010)
& 0.917$\pm$0.001
& \BEST{0.950$\pm$0.003}
& 0.795$\pm$0.087 \\

GraphGRPO
& 0.973$\pm$0.003 (0.836$\pm$0.015)
& 0.910$\pm$0.001
& 0.914$\pm$0.016
& 2.953$\pm$0.158 \\

\midrule

\MODEL{}
& \BEST{0.994$\pm$0.000 (0.959$\pm$0.001)}
& 0.916$\pm$0.001
& 0.907$\pm$0.007
& \BEST{0.612$\pm$0.004} \\

\bottomrule
\end{tabular}

\vspace{8pt}

\textbf{(b) Condition Control}

\vspace{3pt}

\setlength{\tabcolsep}{5pt}
\resizebox{\linewidth}{!}{%
\begin{tabular}{lccccccc}
\toprule

\textbf{Model}
& Synth. $\downarrow$
& $\mu$ $\downarrow$
& $\alpha$ $\downarrow$
& $\varepsilon_{\mathrm{HOMO}}$ $\downarrow$
& $\varepsilon_{\mathrm{LUMO}}$ $\downarrow$
& $C_v$ $\downarrow$
& Overall Rank $\downarrow$ \\

\midrule

DiGress
& 1.618$\pm$0.016
& 0.915$\pm$0.023
& 4.108$\pm$0.199
& 0.012$\pm$0.000
& 0.018$\pm$0.000
& 1.998$\pm$0.086
& 5 \\

CoMole
& 0.497$\pm$0.035
& 0.874$\pm$0.008
& 4.375$\pm$0.175
& 0.012$\pm$0.000
& 0.021$\pm$0.001
& 1.967$\pm$0.093
& 4 \\

GDPO
& 0.547$\pm$0.006
& 0.909$\pm$0.005
& 3.804$\pm$0.065
& 0.012$\pm$0.000
& 0.018$\pm$0.001
& 1.955$\pm$0.063
& 3 \\

GraphGRPO
& 0.577$\pm$0.007
& 0.855$\pm$0.014
& 3.569$\pm$0.118
& 0.011$\pm$0.001
& 0.018$\pm$0.000
& 2.010$\pm$0.040
& 2 \\
\midrule

\MODEL{}
& \BEST{0.361$\pm$0.002}
& \BEST{0.704$\pm$0.009}
& \BEST{3.422$\pm$0.162}
& \BEST{0.009$\pm$0.000}
& \BEST{0.015$\pm$0.000}
& \BEST{1.486$\pm$0.019}
& \BEST{1} \\

\bottomrule
\end{tabular}
}
\end{table*}

As shown in \cref{tab:multi_seed}, the Overall Rank remains stable across seeds, with \MODEL achieving the best overall
performance and low variance.

\subsubsection{Robustness Across oracles}
\label{app:oracle}

We fit Random Forest (RF), Support Vector Regression (SVR), and Gaussian Process Regression (GP) property oracles using all labeled molecules available in each task, with the goal of approximating the underlying property functions as accurately as possible.
The resulting predictors are treated as fixed surrogate property evaluators.
\Cref{tab:oracle_quality} reports their fitting accuracy.
RF achieves the lowest fitting MAE and is therefore used as the reward oracle and primary evaluator throughout our experiments.
\begin{table*}[t]
\centering
\small
\setlength{\tabcolsep}{4pt}
\caption{
\textbf{Training Performance of Oracle Methods:}
we train the models on all polymers or small
molecules in a task to simulate the Oracle. Results from the random forest model are in \textbf{bold} because it has the lowest training MAE.
}
\label{tab:oracle_quality}
\resizebox{0.8\textwidth}{!}{%
\begin{tabular}{llcccccccc}
\toprule
\textbf{Oracle} & \textbf{Metric}
& \textbf{$O_2$} & \textbf{$N_2$} & \textbf{$CO_2$}
& \textbf{$\mu$} & \textbf{$\alpha$}
& \textbf{$\varepsilon_{\mathrm{HOMO}}$}
& \textbf{$\varepsilon_{\mathrm{LUMO}}$}
& \textbf{$C_v$} \\
\midrule
RF  & MAE $\downarrow$
& \textbf{0.412} & \textbf{0.456} & \textbf{0.421}
& \textbf{0.194} & \textbf{1.343} & \textbf{0.002} & \textbf{0.002} & \textbf{0.540} \\
SVR & MAE $\downarrow$
& 0.480 & 0.539 & 0.488
& 0.862 & 4.476 & 0.010 & 0.019 & 2.056 \\
GP  & MAE $\downarrow$
& 0.872 & 0.989 & 0.874
& 1.153 & 6.304 & 0.015 & 0.039 & 3.148 \\
\bottomrule
\end{tabular}%
}
\end{table*}

To assess robustness to evaluator choice, we fix the generated molecules and
rescore the same samples using SVR and GP oracles trained on identical data.
As shown in \cref{tab:oracle_robustness}, \MODEL ranks first under all three evaluators, while a consistent partial ordering
\MODEL $\succ$ CoMole $\succ$ GraphDiT $\succ$ LSTM $\succ$ MELD
is preserved despite changes in absolute ranks.
These results indicate that the relative controllability gains are robust to
the choice of learned property oracle.
\begin{table*}[t]
\centering
\small
\setlength{\tabcolsep}{5pt}
\caption{
\textbf{Oracles for Generation Evaluation:} 
Models are ranked from 1 to 9 across the polymer benchmark by Avg.MAE under each evaluator.
Underlined entries preserve the same relative ordering across all evaluators.
}
\label{tab:oracle_robustness}
\resizebox{0.8\textwidth}{!}{%
\begin{tabular}{cccc}
\toprule
\textbf{Rank} & \textbf{Random Forest} & \textbf{Support Vector Regression} & \textbf{Gaussian Process}  \\
\midrule
\textbf{1} & \textbf{\underline{\MODEL{}}} & \textbf{\underline{\MODEL{}}} & \textbf{\underline{\MODEL{}}}\\
2 & \underline{CoMole} & \underline{CoMole} & \underline{CoMole} \\
3 & \underline{GraphDiT}  & GDSS & GDSS  \\
4 & CSGD & GDPO & GDPO \\
5 & GDSS  & \underline{GraphDiT}  & \underline{GraphDiT}  \\
6 & DeFoG  & \underline{LSTM}  & \underline{LSTM}  \\
7 & GDPO  & CSGD  & DeFoG\\
8 & \underline{LSTM}& DeFoG  & CSGD \\
9 & \underline{MELD}  & \underline{MELD} & \underline{MELD}  \\
\bottomrule
\end{tabular}%
}
\end{table*}

\subsubsection{Computational Cost}

We report online post-training wall-clock costs on a single NVIDIA H100, including candidate generation, oracle evaluation, optimization, and validation, but excluding base-model training and final evaluation.
TreeDiff is reported separately as an inference-time search method.

As shown in \cref{tab:normalized_cost}, CoMole and \MODEL share the same SFT initialization and backbone, providing a matched comparison of post-training cost.
\MODEL requires 16.65 vs.\ 30.37 hours on polymers
and 49.18 vs.\ 50.30 hours on molecules. 
Its endpoint-based updates reuse each clean sample through forward re-noising without retaining the sampled reverse trajectory. 
VIDD~\citep{vidd} and GDPO~\citep{gdpo} have lower total costs under their native configurations, reflecting differences in backbone cost and online budget.
For example, VIDD operates on GDSS~\citep{gdss}
score/SDE trajectories, so replacing its backbone would require reformulating its optimization objective.
\begin{table*}[t]
\centering
\small
\setlength{\tabcolsep}{4.5pt}
\caption{
\textbf{Online post-training and inference-time search cost.}
We report wall-clock cost on one H100, including sampling, oracle evaluation, optimization, and validation. 
Evaluations denote the
dominant model calls for each method's generative dynamics.
CoMole and \MODEL use the same SFT initialization and backbone.
}
\label{tab:normalized_cost}
\resizebox{\textwidth}{!}{%
\begin{tabular}{llrrrr}
\toprule
\textbf{Method}
& \textbf{Online unit}
& \textbf{Evaluations/unit}
& \textbf{Seconds/unit}
& \textbf{ms/evaluation}
& \textbf{Total hours} \\
\midrule
\multicolumn{6}{l}{\textit{Polymer}} \\
GDPO
& reverse trajectory &  500 & 1.312 & 2.625 & 12.28 \\
GraphGRPO
& group trajectory & 500 & 1.940 & 3.884 & 19.30 \\
VIDD
& score trajectory &  2,400 & 9.291 & 3.871 & 3.55 \\
CoMole
& reverse trajectory  & 500 & 0.291
& 0.582 & 30.37 \\
\textbf{\MODEL{}}
& clean endpoint  & 500
& 0.556 & 1.112 & 16.65 \\
TreeDiff$^\dagger$
& search output  & 26,327 & 49.314 & 1.873 & 136.98 \\
\midrule
\multicolumn{6}{l}{\textit{Molecule}} \\
GDPO
& reverse trajectory  & 500 & 0.083 & 0.165 & 13.79 \\
GraphGRPO
& group trajectory  & 500 & 2.191 & 4.380 & 59.21 \\
VIDD
& score trajectory  & 2,400 & 2.112 & 0.880 & 10.59 \\
CoMole
& reverse trajectory  & 500 & 0.302 & 0.604 & 50.30 \\
\textbf{\MODEL{}}
& clean endpoint & 500
& 0.092 & 0.184 & 49.18 \\
TreeDiff$^\dagger$
& search output  & 26,294 & 53.605 & 2.039 & 148.90 \\
\bottomrule
\end{tabular}%
}
\vspace{2pt}

\footnotesize
A reverse trajectory is a complete generative path ending in one
molecule. A group trajectory is evaluated relative to other trajectories
sampled for the same condition. A score trajectory is used for
score-based distillation. A clean endpoint retains only the terminal
molecule.
$^\dagger$TreeDiff performs inference-time search, requiring
approximately 26K model forwards per output.
\end{table*}

\paragraph{Oracle overhead.}
Property rewards for all models are evaluated by CPU-based random-forest oracles, with SAscore computed directly from molecular structure. 
Oracle evaluation takes 1.12\,ms per
polymer and 0.79\,ms per QM9 molecule, corresponding to approximately
12 and 152 seconds per GraphFDM round (overall 20 rounds), respectively. This constitutes
only a small fraction of the total online cost.

\subsection{Limitations and Future Work}
\MODEL is designed as a post-training framework for pretrained graph diffusion models and therefore operates within the structural support provided by the underlying backbone.
In particular, SDC reallocates probability across graph sizes represented by the reference model.
Extending the framework to support-expanding structural updates, such as adaptive graph growth or vocabulary expansion, is an interesting direction for future work.
Our experiments cover both small molecules and polymers, and extending \MODEL to broader chemical spaces and other
structured generative domains would further examine its generality.

The current benchmarks use learned property oracles to enable scalable reward evaluation during online post-training. 
High-fidelity simulations and experimental measurements incur substantial computational or experimental costs, limiting their use for evaluation of large candidate populations. 
Future work could investigate sample-efficient integration of physics-based simulators and experimental feedback.
Finally, we use a normalized multi-property reward to study the core distribution-matching mechanism.
Incorporating task-specific preferences,
constraints, or Pareto-aware objectives would further broaden the range of inverse-design settings addressed by \MODEL.

\end{document}